\theoremstyle{definition}
\newtheorem{theorem}{Theorem}
\newtheorem{lemma}[theorem]{Lemma} 
\algrenewcommand\algorithmiccomment[1]{\hfill\textcolor{blue}{$\triangleright$ #1}}
\newcommand{\LineComment}[1]{\State \textcolor{blue}{$\triangleright$ \textit{#1}}}
\newif\ifWithAppendix
\title{\LARGE \bf
Robust Differentiable Collision Detection for General Objects 
}
\author{Jiayi Chen$^{1, 2}$, Wei Zhao$^{3, 4}$, Liangwang Ruan$^{1, 2}$, Baoquan Chen$^{1}$, He Wang$^{1, 2\dagger}$
\thanks{$^{1}$Peking University. $^{2}$Galbot. $^{3}$Tsinghua University. $^{4}$The Hong Kong Polytechnic University. $^\dagger$Corresponding author: \href{mailto:hewang@pku.edu.cn}{hewang@pku.edu.cn}.}
}
\begin{document}

\maketitle


\begin{abstract}
Collision detection is a core component of robotics applications such as simulation, control, and planning. Traditional algorithms like GJK+EPA compute \textit{witness points}—the closest or deepest-penetration pairs between two objects—but are inherently non-differentiable, preventing gradient flow and limiting gradient-based optimization in contact-rich tasks such as grasping and manipulation. Recent work introduced efficient first-order randomized smoothing to make witness points differentiable; however, their direction-based formulation is restricted to convex objects and lacks robustness for complex geometries. In this work, we propose a robust and efficient differentiable collision detection framework that supports both convex and concave objects across diverse scales and configurations. Our method introduces distance-based first-order randomized smoothing, adaptive sampling, and equivalent gradient transport for robust and informative gradient computation. Experiments on complex meshes from DexGraspNet and Objaverse show significant improvements over existing baselines. Finally, we demonstrate a direct application of our method for dexterous grasp synthesis to refine the grasp quality. The code is available at \href{https://github.com/JYChen18/DiffCollision}{https://github.com/JYChen18/DiffCollision}.
\end{abstract}

\section{Introduction}

Collision detection is a long-standing and fundamental problem in robotics and computer graphics, with applications ranging from simulation and control to planning and learning. A key outcome of collision detection is the \emph{witness points}~\cite{ericson2004real}—the closest points when objects are separated or the deepest penetration points when they overlap. 
These points specify where and how objects interact: they directly indicate contact positions and allow computation of contact distance and direction, making them a cornerstone of contact modeling~\cite{SiggraphContact22, le2024contact}. 

Despite the critical role of these witness points, traditional algorithms 
such as GJK~\cite{gilbert2002fast, montaut2024gjk++}, EPA~\cite{van2001proximity}, and MPR~\cite{snethen2008xenocollide} are non-differentiable, preventing gradients from propagating through them. This lack of differentiability is one possible reason limiting the potential of gradient-based optimization in contact-rich tasks such as grasping and manipulation. As a result, robotics has often relied on gradient-free approaches such as reinforcement learning~\cite{rajeswaran2017learning, chen2023visual} and sampling-based planners~\cite{howell2022}, in contrast to the success of gradient-based methods in deep learning.

Recent work has begun improving the differentiability of contact modeling~\cite{li2020incremental, pang2023global, jin2024complementarity}, but mainly focuses on improving the contact solvers and smoothing the \emph{magnitude} of contact forces with respect to inter-object distance, known as \emph{force-at-a-distance}. In contrast, the differentiability of witness points themselves has received little attention. This omission is critical: without gradients through witness points, it is difficult to enforce desired contact positions or adjust force directions on complex shapes such as concave regions (Figure~\ref{fig: motivation}).

To bridge this gap, Montaut et al.~\cite{montaut2022differentiable} introduced efficient first-order randomized smoothing approaches to make witness point differentiable. However, their direction-based method is limited to convex objects and has only been extensively tested on simple meshes (as few as 12 vertices), falling short of the demands of real robotic applications.

\begin{figure}
    \centering
    \includegraphics[width=1.0\linewidth]{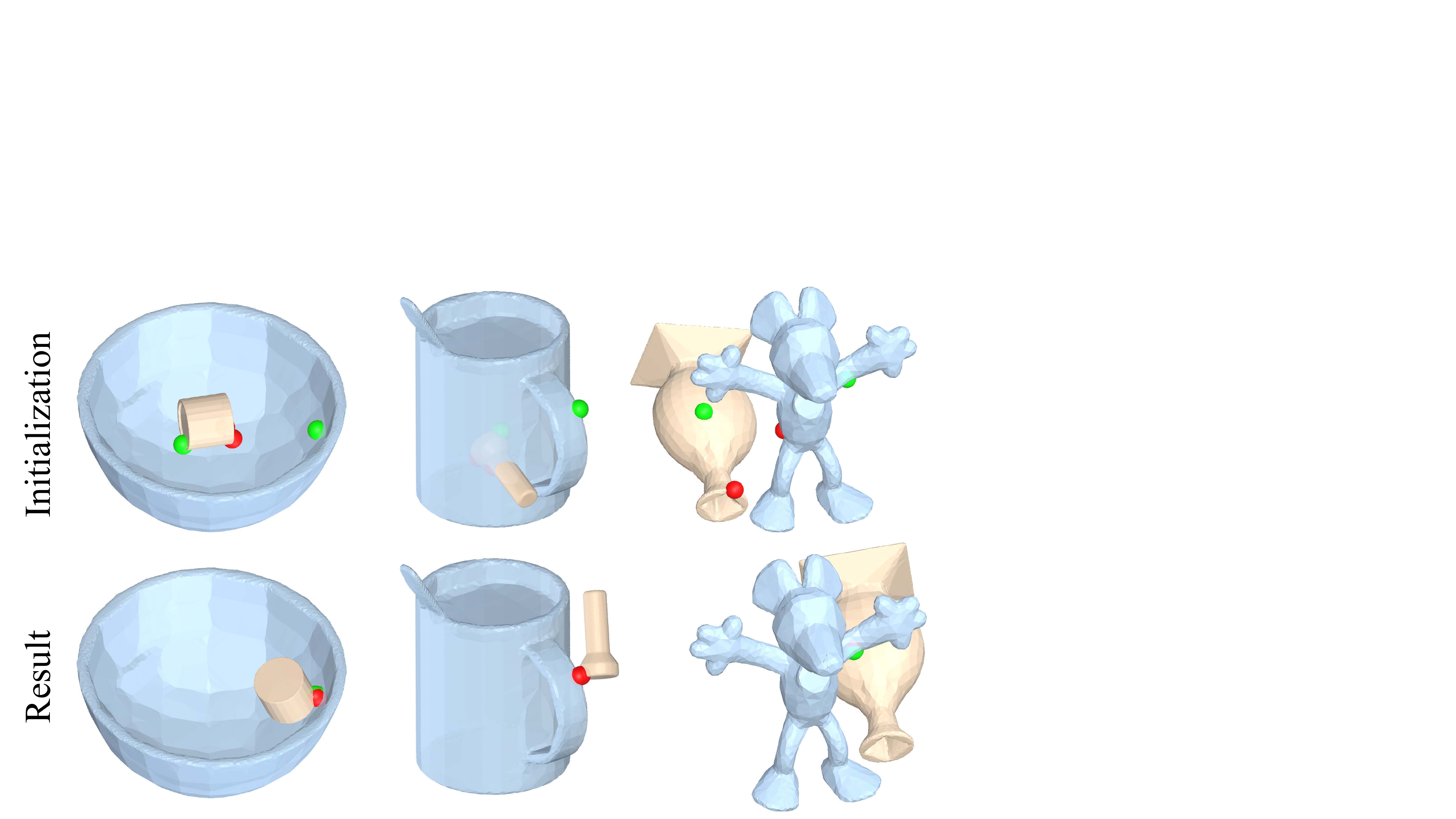}
    \caption{\textbf{Task illustration.} We solve for the relative pose that aligns target points (green) with witness points (red). This task validates the computed derivatives of the witness points \textit{with respect to} the object poses.}
    \label{fig: teaser}
\end{figure}

\begin{figure}
    \centering
    \includegraphics[width=1.0\linewidth]{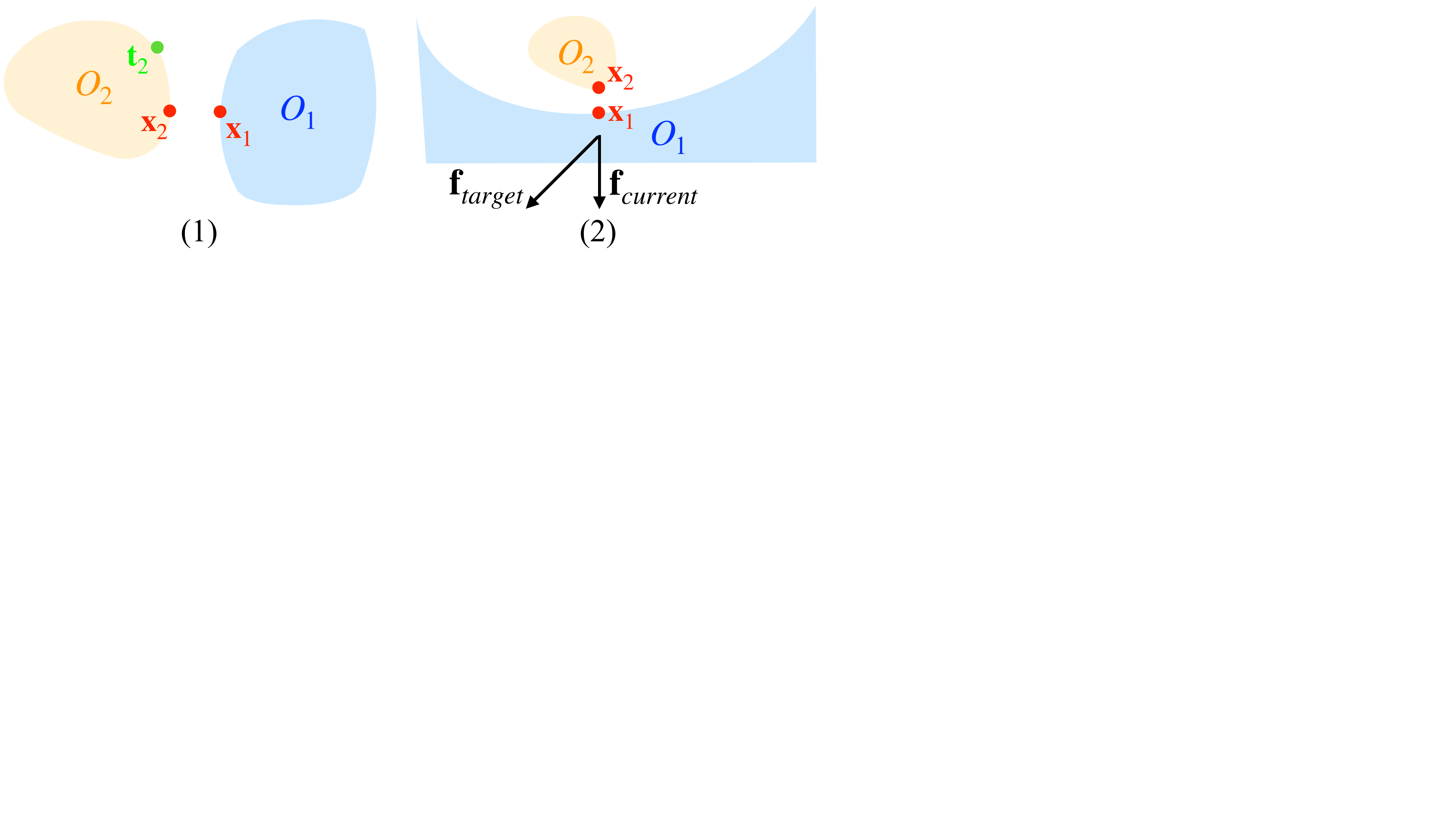}
    \caption{\textbf{Motivation.} When optimizing the pose $T_2$ of object $O_2$, treating the witness points $\mathbf{x}_1, \mathbf{x}_2$ as fixed on objects $O_1$ and $O_2$ without considering their derivatives cannot handle \textit{at least} two cases: (1) enforcing a specified point $\mathbf{t}_2$ to be a witness point (i.e., as the task in Fig.~\ref{fig: teaser}); (2) enforcing object $O_2$ to produce a target contact force $\mathbf{f}_{\text{target}}$ on a concave object $O_1$, where ignoring $\partial \mathbf{x}_1 / \partial T_2$ actually assumes that $\mathbf{x}_1$ is fixed. In this situation, to generate a leftward contact force, one would move $\mathbf{x}_2$ rightward, which is incorrect. By contrast, our method accounts for $\partial \mathbf{x}_1 / \partial T_2$, recognizing that when $\mathbf{x}_2$ moves rightward, $\mathbf{x}_1$ also shifts rightward even more. Thus, the correct update is to move $O_2$ leftward.}
    \label{fig: motivation}
\end{figure}

In this work, we propose a general approach to calculate the derivative of witness points that applies to both convex and concave objects. As shown in Figure~\ref{fig: teaser}, our method is robust to object scale, handles both penetrating and non-penetrating configurations, and scales to complex shapes such as those in DexGraspNet~\cite{wang2022dexgraspnet} and Objaverse~\cite{deitke2023objaverse}.

Our method belongs to the family of first-order randomized smoothing but introduces three key innovations. First, instead of relying on direction–based smoothing derived from GJK optimality conditions~\cite{montaut2022differentiable}, we propose \textit{distance–based smoothing}, which generalizes naturally beyond convex shapes. Second, rather than approximating local geometry from neighboring vertices—which is mesh-sensitive and not parallel-friendly—we propose an \textit{adaptive sampling} strategy,  yielding greater robustness. Third, we introduce \emph{equivalent gradient transport}, which improves gradient quality when only one object’s pose is optimized, a scenario frequently encountered in grasping and manipulation.

Experiments on both convex and concave objects demonstrate significant improvements over existing baselines. On complex meshes from DexGraspNet and Objaverse, our method achieves a median error below 0.1 mm across 400 random object pairs with 1024 tasks each. At mm-level accuracy, we outperform baselines by more than $40\%$. Our approach is also memory- and time-efficient, and easily parallelizable on GPUs. Finally, we demonstrate a direct application to dexterous grasp refinement, achieving improved grasp quality.

In summary, our contributions are:
\begin{itemize}
    \item An open-source, robust, and efficient differentiable collision detection framework for both convex and concave objects, built upon distance-based smoothing with adaptive sampling and equivalent gradient transport;
    \item Competitive empirical performance on large-scale object datasets, demonstrating the robustness of our method to object shapes, scales and configurations;
    \item A preliminary application to dexterous grasp refinement.
\end{itemize}

\section{Related Work}

\subsection{Discrete Collision Detection for Meshes}

A major class of collision detection algorithms leverages the \textit{Minkowski sum} to reduce collision detection to an origin-in-polytope test. These methods are highly efficient and widely used in rigid-body simulators such as MuJoCo~\cite{todorov2012mujoco} and PyBullet~\cite{coumans2016pybullet}. Representative algorithms include the Gilbert-Johnson-Keerthi (GJK) algorithm~\cite{gilbert2002fast, montaut2024gjk++}, which iteratively searches for the closest point to the origin in the Minkowski difference; the Expanding Polytope Algorithm (EPA)~\cite{van2001proximity}, which refines the GJK result to compute penetration depth and witness points; and the Minkowski Portal Refinement (MPR) algorithm~\cite{snethen2008xenocollide}, which combines elements of both for practical efficiency. These methods assume convex meshes, and concave objects typically require convex decomposition~\cite{mamou2009simple, wei2022coacd}.

Another line of work relies on exact triangle intersection tests, which check for intersections between mesh triangles. While computationally more expensive, these methods are naturally parallelizable on GPUs. A common variant is the Separating Axis Test (SAT)~\cite{gottschalk1996separating}, again limited to convex shapes. More general methods explicitly check vertex–face and edge–edge pairs, allowing them to handle concave or deformable objects. However, they cannot identify deepest penetration points and typically require penetration-free configurations at each timestep~\cite{li2020incremental}, making them less common in robotics applications.

All of the above methods are non-differentiable, except for vertex–face and edge–edge tests, which provide highly limited and local gradients. In contrast, our method computes derivatives of witness points that are compatible with any forward collision detection algorithm, enabling gradient flow through witness points.

\subsection{Differentiable Collision Detection} 

Interest in differentiable collision detection has grown only recently, largely motivated by the progress of differentiable simulation~\cite{de2018end, freeman2021brax,warp2022}.  
Tracy et al.~\cite{tracy2022differentiable} formulate collision detection as a convex optimization problem of scaling the two objects, which naturally yields derivatives. However, their method is restricted to convex primitives and scales poorly to complex meshes compared to GJK.
Montaut et al.~\cite{montaut2022differentiable} introduce random smoothing techniques to approximate derivatives. The zeroth-order method, akin to finite differences, requires multiple collision queries per backward pass and is thus inefficient. The first-order method leverages the optimality conditions of the GJK algorithm, offering efficiency but restricted to strictly convex shapes and not robust enough for complex meshes. 

Inspired by~\cite{montaut2022differentiable}'s first-order method, we propose a general approach for both convex and concave objects with improved robustness, while preserving computational efficiency.  

\section{Preliminaries on the $\mathrm{SE}(3)$ Lie Group}


\paragraph{Lie algebra} 
The tangent space of $\mathrm{SE}(3)$ at the identity element is denoted as Lie algebra $\mathfrak{se}(3)$:
\begin{equation}
\mathfrak{se}(3) = \left\{
\begin{bmatrix}
[\omega]_\times & v \\
0 & 0
\end{bmatrix} \;\middle|\;
\omega, v \in \mathbb{R}^3 \right\}
\end{equation}
where $[\omega]_\times$ is the skew-symmetric matrix of $\omega$. The standard exponential map
\begin{equation}
    \exp: \mathfrak{se}(3)\to \mathrm{SE}(3)
\end{equation}
is a local diffeomorphism around the identity and serves as a natural
retraction, mapping elements of the tangent space back onto the group
$\mathrm{SE}(3)$.

\paragraph{Adjoint operator}  
For $T \in \mathrm{SE}(3)$, the adjoint operator describes how a rigid-body
transformation conjugates Lie algebra elements:
\begin{equation}
    \mathrm{Ad}_T : \mathfrak{se}(3) \to \mathfrak{se}(3), \quad
    \mathrm{Ad}_T(\xi) \;=\; T \, \xi \, T^{-1}.
\end{equation}
The adjoint operator satisfies the fundamental identity
\begin{equation}\label{eq: adjoint}
    T \exp(\xi) T^{-1} = \exp\!\big( (\mathrm{Ad}_T \xi) \big),
\end{equation}
which is widely used in robotics to transport velocities, Jacobians, and
differentials between coordinate frames.

For further details on $\mathrm{SE}(3)$, its Lie algebra, and the adjoint operator, we refer the reader to \cite{sola2018micro}.

\section{Methods}

\begin{figure}
    \vspace{2mm}
    \centering
    \includegraphics[width=1.0\linewidth]{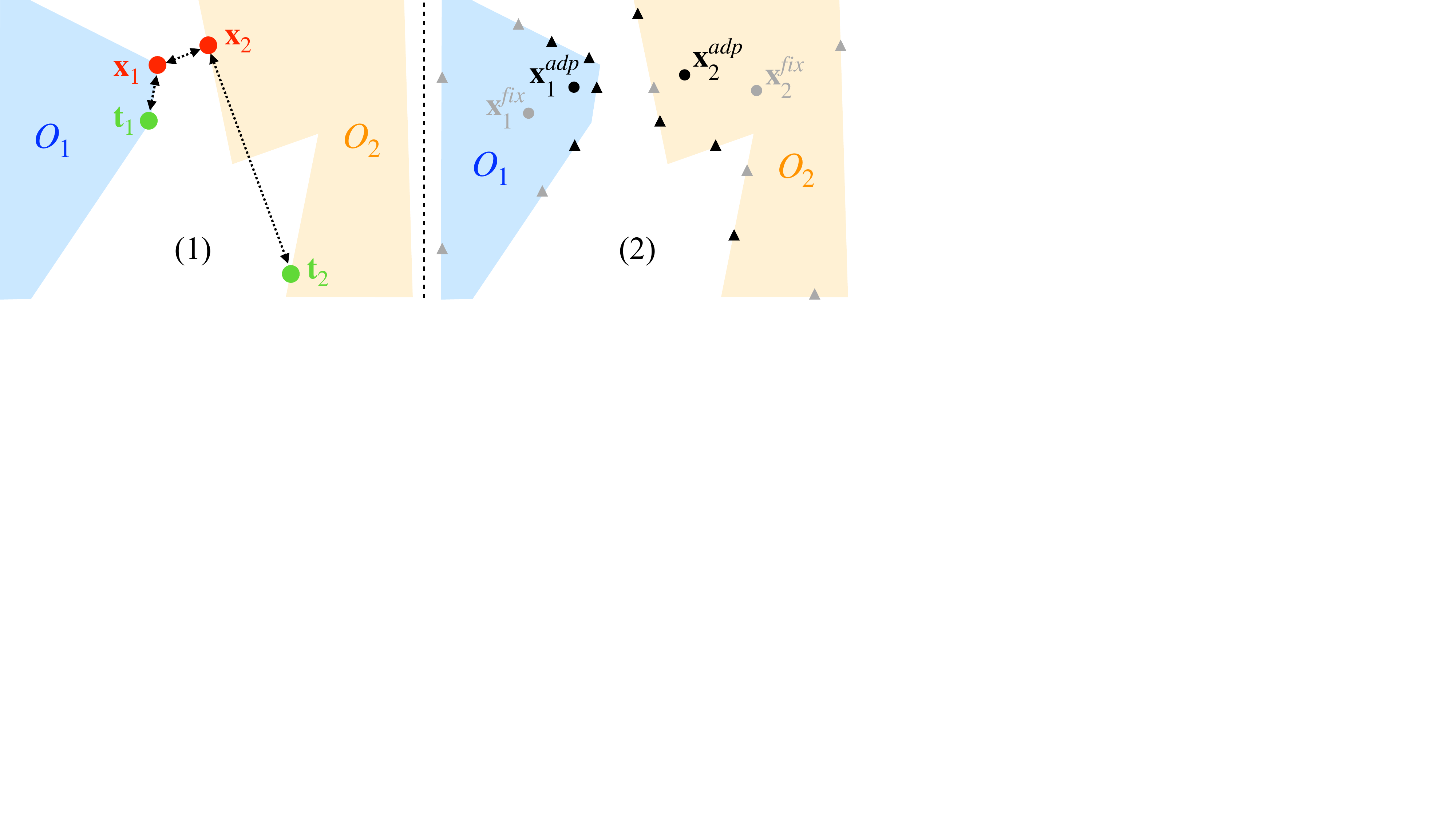}
    \caption{\textbf{(1) Task formulation.} The witness points $\mathbf{x}_1,\mathbf{x}_2$ calculated by collision detection are expected to match specified target points $\mathbf{t}_1,\mathbf{t}_2$ via the losses shown by dotted lines.
   \textbf{(2) Smoothing witness points.} Adaptive sampling yields better surface samples (triangles) than fixed sampling, improving the approximation of $\mathbf{x}_1,\mathbf{x}_2$.}
   \vspace{-3mm}
    \label{fig: method}
\end{figure}

\subsection{Task Formulation}

Let $O_i$ be objects with poses $T_i \in SE(3)$ for $i \in \{1,2\}$. We denote world-frame points as $\mathbf{p}_i = T_i \mathbf{p}_{i,o}$, where $\mathbf{p}_{i,o}$ is the representation in the local frame of $O_i$. Next, we define:
\begin{itemize}
    \item \textbf{Witness points} $\mathbf{x}_i$: The closest/penetrating points between $O_1$ and $O_2$ computed via GJK/EPA. $i \in \{1,2\}$.
    \item \textbf{Target points} $\mathbf{t}_i$: The target contact points specified per task and fixed in the local frame of $O_i$. $i \in \{1,2\}$.
    \item \textbf{Boundary samples} $\{\mathbf{v}_{i,j}\}_{j=1}^N$: $N$ points randomly sampled to approximate the object surface geometry.
\end{itemize}

The objective is to optimize the poses $T_1, T_2$ such that the witness points approach the targets:
\begin{equation}
\label{eq: loss}
    \min_{T_1, T_2} \mathcal{L} = \|\mathbf{x}_1 - \mathbf{x}_2\|^2 
      + \|\mathbf{x}_1 - \mathbf{t}_1\|^2 
      + \|\mathbf{x}_2 - \mathbf{t}_2\|^2
\end{equation}
where $\mathbf{x}_1$ and $\mathbf{x}_2$ depend on $T_1$ and $T_2$. Our proposed algorithm calculates $\nabla_{T_i}\mathcal{L}$ through $\mathbf{x}_i$, as shown in Algorithm~\ref{alg:diff_collision}.

\subsection{Distance–Based Softmax Smoothing}
\label{sec:smooth}
Without loss of generality, we focus on $\mathbf{x}_1$, as $\mathbf{x}_1$ and $\mathbf{x}_2$ are symmetric and the results for $\mathbf{x}_2$ follow analogously.

We begin by reformulating $\mathbf{x}_1$ as
\begin{equation}
    \mathbf{x}_1 = \underset{\mathbf{v}\in T_1 \partial O_1}{\operatorname{argmax}} \; \big(-\|\mathbf{v}-\mathbf{x}_2\|^2 \big),
\end{equation}
where $\partial O_1$ denotes the boundary of object $O_1$ and $T_1 \partial O_1$ denotes the object boundary in the world frame. This formulation is exact when the two objects do not overlap. In penetration cases, it becomes an approximation, but we find empirically that our method still performs well—likely because the penetration is penalized by the first term in the loss function (Eq.~\ref{eq: loss}). Since penetration is typically undesirable in robotic tasks, such a term is commonly included in gradient-based methods, making this formulation a reasonable approximation.

To obtain a differentiable surrogate, we sample $N$ points $\{\mathbf{v}_{1,j}\}_{j=1}^N\in T_1\partial O_1$ and replace the non-smooth $\operatorname{argmax}$ with a softmax relaxation. Specifically, we define the score value $u_j\in\mathbb{R}$ and softmax weights $w_j\in\mathbb{R}$
\begin{equation}
    u_j = -\|\mathbf{v}_{1,j} - \mathbf{x}_2\|^2, 
    \quad w_j = \frac{\exp (u_j / \tau)}
    {\sum_{k=1}^N \exp (u_k / \tau)}
\end{equation}
where $\tau$ is the temperature parameter. In practice, we find that setting $\tau=\operatorname{std}(\mathbf{u})$ (i.e., the standard deviation of $u_i$) works well, as it adapts automatically to object scale and avoids manual tuning.
The witness point is then approximated by the weighted combination of sampled points:
\begin{equation}\label{eq:smoothing}
\mathbf{x}_1 \approx \mathbf{x}_1^\star = \sum_{j=1}^N w_j \mathbf{v}_{1, j} = \mathbf{V}_1\mathbf{w}_1^T,
\end{equation}
where $\mathbf{w}_1=[w_1, ...,w_N]\in \mathbb{R}^N$ and $\mathbf{V}_1=[\mathbf{v}_{1,1},...,\mathbf{v}_{1,N}]\in\mathbb{R}^{3\times N}$. If $\mathbf{x}_{1,o} \in \{ \mathbf{v}_{1,j,o} \}$, then $\|\mathbf{x}_1 - \mathbf{x}_1^\star \|\to 0$ as $\tau \to 0$. 

\begin{algorithm}[t]
\caption{Robust Differentiable Collision Detection}
\label{alg:diff_collision}
\begin{algorithmic}[1]
\Require Objects $O_1, O_2$;Poses $T_1, T_2$;Gradients $\nabla_{\mathbf{x}_1}\mathcal{L}, \nabla_{\mathbf{x}_2}\mathcal{L}$
\Ensure Witness points $\mathbf{x}_1, \mathbf{x}_2$; Gradients $\nabla_{T_1}\mathcal{L}, \nabla_{T_2}\mathcal{L}$

\Function{Forward}{$T_1, T_2$}
    \LineComment{\textit{Compute witness points using standard solver}}
    \State $\mathbf{x}_1, \mathbf{x}_2 \gets \text{GJK}(O_1, O_2, T_1, T_2)$
    \State \Call{SaveForBackward}{$O_1, O_2, T_1, T_2, \mathbf{x}_1, \mathbf{x}_2$}
    \State \Return $\mathbf{x}_1, \mathbf{x}_2$
\EndFunction

\Statex

\Function{Backward}{$\nabla_{\mathbf{x}_1}\mathcal{L}, \nabla_{\mathbf{x}_2}\mathcal{L}$}
    \LineComment{\textit{Fixed or Adaptive Sampling (Sec.~\ref{sec:sampling})}}
    \State $\mathbf{V}_1, \mathbf{V}_2 \gets \text{SampleSurface}(O_1, O_2, T_1, T_2, \mathbf{x}_1, \mathbf{x}_2)$
    
    \LineComment{\textit{Distance-based Randomized Smoothing} (Sec.~\ref{sec:smooth})}
    \State $\mathbf{x}_1^\star, \mathbf{x}_2^\star \gets \text{WeightedSum}(\mathbf{V}_1, \mathbf{V}_2, \mathbf{x}_1, \mathbf{x}_2)$ \Comment{Eq.~\ref{eq:smoothing}}
    
    \LineComment{\textit{Derivative Calculation (Sec.~\ref{sec:derivative})}}
    \State $\frac{\partial \mathbf{x}_1}{\partial T_1}, \frac{\partial \mathbf{x}_2}{\partial T_2} \gets \text{ComputeSelfJacob}(\mathbf{x}_1^\star, \mathbf{x}_2^\star)$  \Comment{Eq.~\ref{eq:approximate}}
    \State $\frac{\partial \mathbf{x}_1}{\partial T_2}, \frac{\partial \mathbf{x}_2}{\partial T_1} \gets \text{ComputeCrossJacob}(\mathbf{x}_1^\star, \mathbf{x}_2^\star)$  \Comment{Eq.~\ref{eq:crossjacobi}}
    
    \State $\nabla_{T_1}\mathcal{L} \gets \nabla_{\mathbf{x}_1}\mathcal{L} \cdot \frac{\partial \mathbf{x}_1}{\partial T_1} + \nabla_{\mathbf{x}_2}\mathcal{L} \cdot \frac{\partial \mathbf{x}_2}{\partial T_1}$ \Comment{Chain rule}
    \State $\nabla_{T_2}\mathcal{L} \gets \nabla_{\mathbf{x}_1}\mathcal{L} \cdot \frac{\partial \mathbf{x}_1}{\partial T_2} + \nabla_{\mathbf{x}_2}\mathcal{L} \cdot \frac{\partial \mathbf{x}_2}{\partial T_2}$ 
    
    \LineComment{\textit{Equivalent Gradient Transport (Sec.~\ref{sec:egt})}}
    \State $\nabla_{T_1}\mathcal{L}, \nabla_{T_2}\mathcal{L} \gets \text{EGT}(T_1, T_2, \nabla_{T_1}\mathcal{L}, \nabla_{T_2}\mathcal{L})$ 
    
    \State \Return $\nabla_{T_1}\mathcal{L}, \nabla_{T_2}\mathcal{L}$
\EndFunction
\end{algorithmic}
\end{algorithm}

\subsection{Sampling Strategies}\label{sec:sampling}
The sampled points $\mathbf{v}_{1,j,o}$ provide a discrete approximation of the object surface and are thus critical to the accuracy of the witness point approximation. To balance quality and efficiency, we first generate a large candidate set (hundreds of points) as a preprocessing step, and then select a small subset (e.g., 16 points) in each iteration as the local approximation.

Formally, let $P_{\mathrm{static}} = P_{\mathrm{ver}} \cup P_{\mathrm{sur}}$ be the fixed set of object vertices and random surface samples. At each iteration, we augment this with the current witness point $\mathbf{x}_{1,o}$ to form the full candidate set $P = P_{\mathrm{static}} \cup \{\mathbf{x}_{1,o}\}$. 
We then select the active local subset by filtering $P$ based on the Euclidean distance to $\mathbf{x}_{1,o}$:
\begin{equation}
    \{\mathbf{v}_{1,j,o}\} = \{\mathbf{v}_{1,j,o}\in P : \|\mathbf{v}_{1,j,o} - \mathbf{x}_{1,o} \| \leq \alpha \}.
\end{equation}

We consider two strategies for choosing $\alpha$:
\begin{itemize}
    \item \textbf{Fixed Sampling}: $\alpha$ is a constant hyperparameter used throughout the optimization.
    \item \textbf{Adaptive Sampling}: $\alpha$ is dynamically updated at each iteration as $\alpha = \max\big(\|\mathbf{t}_1 - \mathbf{x}_1\|, \varepsilon\big)$,
where $\mathbf{t}_1$ is the target contact point and $\varepsilon$ is a small positive constant for numerical stability. Adaptive sampling is particularly helpful for concave objects with complex geometries.
\end{itemize}

\subsection{Derivative Calculation}
\label{sec:derivative}
Applying the chain rule on Eq.~\ref{eq:smoothing}:
\begin{equation}
    \frac{\partial \mathbf{x}_1}{\partial T_1} \approx \frac{\partial \mathbf{x}_1^\star}{\partial T_1}
    = \frac{\partial \mathbf{x}_1^\star}{\partial \mathbf{V}_1} \frac{\partial \mathbf{V}_1}{\partial T_1} 
    + \frac{\partial \mathbf{x}_1^\star}{\partial \mathbf{w}_1} \frac{\partial \mathbf{w}_1}{\partial T_1}.
\end{equation}
When differentiating with respect to $T_1$, we treat $\mathbf{x}_2$ as fixed. Under this assumption, the expression becomes
\begin{equation}\label{eq:approximate}
    \frac{\partial \mathbf{x}_1}{\partial T_1} \approx 
    \frac{\partial \mathbf{x}_1^\star}{\partial \mathbf{V}_1} \frac{\partial \mathbf{V}_1}{\partial T_1} 
    + \frac{\partial \mathbf{x}_1^\star}{\partial \mathbf{w}_1} 
      \frac{\partial \mathbf{w}_1}{\partial \mathbf{V}_1} 
      \frac{\partial \mathbf{V}_1}{\partial T_1}.
\end{equation}
This approximation is exact when $\mathbf{x}_2$ is a vertex of $T_2 O_2$, since small perturbations of $T_1$ leave its position unchanged. More generally, we find empirically that the same assumption also yields a reasonable approximation when $\mathbf{x}_2$ lies on the surface of $O_2$.  
An analogous derivation gives $\partial \mathbf{x}_2 / \partial T_2$.  

Next, we consider the cross derivative of $\mathbf{x}_1$ with respect to $T_2$. Since $\partial \mathbf{V}_1/\partial T_2 = 0$, we obtain
\begin{equation}\label{eq:crossjacobi}
    \frac{\partial \mathbf{x}_1}{\partial T_2} \approx \frac{\partial \mathbf{x}_1^\star}{\partial T_2} 
    = \frac{\partial \mathbf{x}_1^\star}{\partial \mathbf{w}_1}
      \frac{\partial \mathbf{w}_1}{\partial \mathbf{x}_2}
      \frac{\partial \mathbf{x}_2}{\partial T_2}.
\end{equation}
Because an approximation of $\partial \mathbf{x}_2 / \partial T_2$ has already been derived as Eq.~\ref{eq:approximate}, it can be directly substituted into the above expression to evaluate $\partial \mathbf{x}_1 / \partial T_2$, thereby avoiding reusing the assumption that $\mathbf{x}_{2,o}$ is fixed. By symmetry, the cross derivative $\partial \mathbf{x}_2 / \partial T_1$ can be obtained in the same manner.

\subsection{Equivalent Gradient Transport}
\label{sec:egt}
In practical robotic tasks such as grasping, it is common to optimize only the robot’s pose while keeping the object fixed. Suppose $T_1$ is fixed during optimization. In this case, directly updating $T_2$ alone can be inefficient, because the task’s inherent symmetry is broken and $T_2$ may need larger adjustments than $T_1$ to reach the target points $\mathbf{t}_1$ if the initialization is far from the target. To tackle it, we propose a strategy to transfer the update that would nominally apply to $T_1$ to $T_2$ in an equivalent manner, as shown in Figure~\ref{fig: equiv grad}.

Given the gradient in the ambient space $G_i = \partial L / \partial T_i \in \mathbb{R}^{4\times 4}$,
the corresponding Lie algebra element is obtained as
\begin{equation}
    \xi_i = \mathrm{Proj}_{\mathfrak{se}(3)}\!\big( T_i^{-1} G_i \big), 
    \quad i = 1,2,
\label{eq: proj grad}
\end{equation}
where $\mathrm{Proj}_{\mathfrak{se}(3)}(\cdot)$ denotes the projection of a $4\times 4$ matrix onto the linear space $\mathfrak{se}(3)$. The following lemma states how to obtain the \emph{equivalent gradient} transported from $T_1$ to $T_2$.

\begin{lemma}\label{lem:grad_tran} 
For any $T_1,T_2\in \mathrm{SE}(3)$, $\xi_1\in\mathfrak{se}(3)$, and $\lambda\in\mathbb{R}$, updating $T_1$ with $-\lambda\xi_1$ is equivalent, in terms of the relative pose, to updating $T_2$ with $-\lambda\tilde{\xi}_2$, where 
\begin{equation}
    \tilde{\xi}_2 = - \mathrm{Ad}_{T_2^{-1}T_1}\xi_1.
\end{equation}
Formally, the equivalent relative pose is given by
\begin{equation}\label{eq:transport-identity}
    \left(T_1 \exp(-\lambda \xi_1)\right)^{-1}T_2
    = T_1^{-1}\!\left(T_2 \exp(-\lambda \tilde{\xi}_2)\right).
\end{equation}
\end{lemma}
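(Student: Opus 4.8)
The plan is to verify the stated identity in Eq.~\ref{eq:transport-identity} directly, relying only on the adjoint identity of Eq.~\ref{eq: adjoint} and elementary group algebra in $\mathrm{SE}(3)$; the claimed equivalence of the two updates then follows by reading off the two sides. Concretely, I would reduce both sides to the common expression $\exp(\lambda\xi_1)\,T_1^{-1}T_2$ and conclude.

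First I would simplify the left-hand side. Using $(AB)^{-1}=B^{-1}A^{-1}$ together with $\exp(-\lambda\xi_1)^{-1}=\exp(\lambda\xi_1)$, the left-hand side becomes
\[
\left(T_1\exp(-\lambda\xi_1)\right)^{-1}T_2 = \exp(\lambda\xi_1)\,T_1^{-1}T_2 .
\]
This isolates all $\lambda$-dependence into a single exponential factor sitting to the left of the fixed relative transform $T_1^{-1}T_2$.

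Next I would expand the right-hand side by substituting $\tilde\xi_2 = -\mathrm{Ad}_{T_2^{-1}T_1}\xi_1$; using linearity of $\mathrm{Ad}$ in its Lie-algebra argument gives $-\lambda\tilde\xi_2 = \mathrm{Ad}_{T_2^{-1}T_1}(\lambda\xi_1)$, so the right-hand side reads $T_1^{-1}T_2\,\exp\!\big(\mathrm{Ad}_{T_2^{-1}T_1}(\lambda\xi_1)\big)$. Applying the adjoint identity of Eq.~\ref{eq: adjoint} with $T=T_2^{-1}T_1$ rewrites the exponential as a conjugation, yielding
\[
T_1^{-1}T_2\,(T_2^{-1}T_1)\,\exp(\lambda\xi_1)\,(T_2^{-1}T_1)^{-1}.
\]
Since $(T_2^{-1}T_1)^{-1}=T_1^{-1}T_2$, the leading product $T_1^{-1}T_2\,T_2^{-1}T_1$ telescopes to the identity, leaving exactly $\exp(\lambda\xi_1)\,T_1^{-1}T_2$, which matches the simplified left-hand side and closes the equality.

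I expect no genuine obstacle here: the argument is careful bookkeeping of inverses plus one invocation of Eq.~\ref{eq: adjoint}. The only points requiring attention are (i) applying $\exp(-\lambda\xi_1)^{-1}=\exp(\lambda\xi_1)$ with the correct sign, and (ii) using linearity of $\mathrm{Ad}_{T_2^{-1}T_1}$ to absorb the scalar $\lambda$ before invoking the adjoint identity. Finally, I would observe that because the two sides are equal as $\mathrm{SE}(3)$ elements, both update schemes leave the two objects in the same relative configuration; hence any quantity depending only on the relative pose—in particular the collision loss—is unchanged, which is precisely the equivalence claimed and the property exploited by equivalent gradient transport.
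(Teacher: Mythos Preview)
Your proof is correct and follows essentially the same approach as the paper: both reduce the left-hand side to $\exp(\lambda\xi_1)\,T_1^{-1}T_2$ via the inverse of the exponential, then use the adjoint identity of Eq.~\ref{eq: adjoint} (with $T=T_2^{-1}T_1$) to match the right-hand side. The only cosmetic difference is that the paper transforms the left-hand side directly into the right-hand side, while you reduce both sides to a common expression; the underlying computation is identical.
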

\begin{proof}
Since $(\exp(-\lambda \xi_1))^{-1} = \exp(\lambda \xi_1)$, we have
\begin{equation}
\big(T_1 \exp(-\lambda \xi_1)\big)^{-1}T_2
= \exp(\lambda \xi_1)T_1^{-1}T_2.
\end{equation}
By the adjoint identity Eq.~\ref{eq: adjoint},
\begin{equation}
\exp(\lambda \xi_1)T_1^{-1}T_2
= T_1^{-1}T_2 \exp\!\big(\lambda\,\mathrm{Ad}_{T_2^{-1}T_1}\xi_1\big).
\end{equation}
Substituting $\tilde{\xi}_2=-\mathrm{Ad}_{T_2^{-1}T_1}\xi_1$ yields Eq.~\ref{eq:transport-identity}.
\end{proof}

\begin{figure}
    \vspace{2mm}
    \centering
    \includegraphics[width=\linewidth]{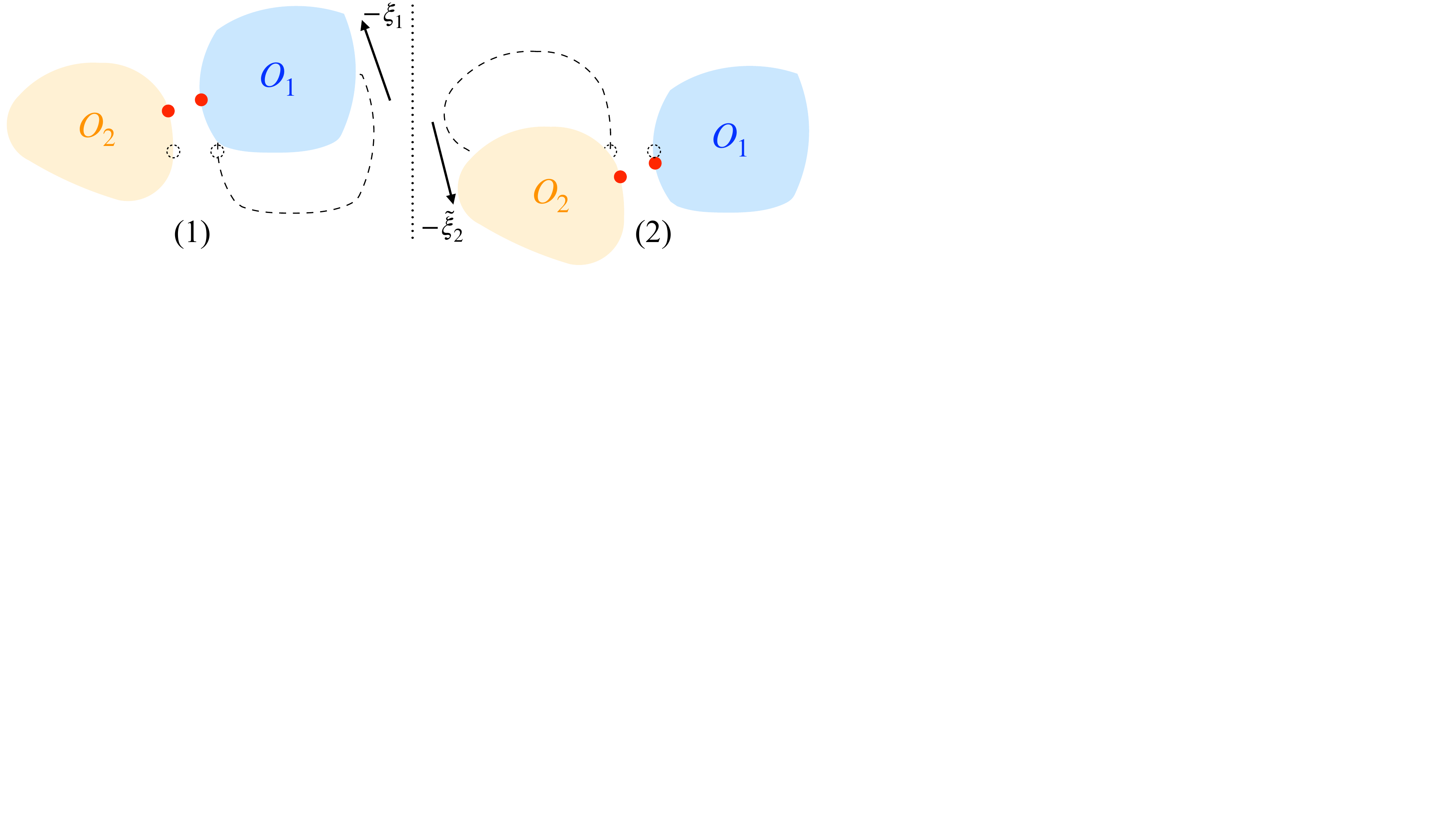}
    \caption{\textbf{Equivalent gradient transport (EG).} Updating $T_1$ with gradient $\xi_1$ produces the same relative pose as updating $T_2$ with our proposed \textit{equivalent gradient} $\tilde{\xi}_2$. The object and witness points before the update are shown as dotted lines.}
    \label{fig: equiv grad}
\end{figure}

It is worth noting that the equivalent gradient transport technique is not limited to cases where $T_1$ and $T_2$ are decision variables. It can be implemented as a plug-in module and applied when $T_1$ and $T_2$ act as intermediate transformations within a larger computation graph, as long as we project the obtained gradient $\xi_2+\tilde{\xi}_2$ from $\mathfrak{se}(3)$ back to the ambient space using the inverse of Eq.~\ref{eq: proj grad}. For example, in the application shown in Section~\ref{sec: applic}, $T_2$ is calculated by forward kinematics and this module can still be used.

\begin{table*}[]
    \vspace{2mm}
    \centering
    \begin{tabular}{l|ccc|ccc|ccc|ccc}
    \hline
    \multirow{2}{*}{Method} & \multicolumn{3}{c|}{DexGraspNet (Convex)} & \multicolumn{3}{c|}{DexGraspNet (Concave)} & \multicolumn{3}{c|}{Objaverse (Convex)} & \multicolumn{3}{c}{Objaverse (Concave)} \\
    & D5$\downarrow$ & D9$\downarrow$ & Acc(\%)$\uparrow$  & D5$\downarrow$ & D9$\downarrow$ & Acc(\%)$\uparrow$ & D5$\downarrow$ & D9$\downarrow$ & Acc(\%)$\uparrow$ & D5$\downarrow$ & D9$\downarrow$ & Acc(\%)$\uparrow$ \\
    \hline
    Analytical & 1.0e-4 & 2.0e-3 & 4.0 & 3.3e-5 & 1.2e-3 & 15.2 & 5.1e-5 & 1.4e-3 & 8.9 & 3.3e-5 & 8.9e-4 & 13.7 \\
    Finite Difference & 2.7e-6 & 1.5e-4 & 40.6 & 3.9e-6 & 1.8e-4 & 35.9 & 2.5e-6 & 1.7e-4 & 41.3 & 8.0e-6 & 2.6e-4 & 27.9 \\
    RS-0~\cite{montaut2022differentiable} & 1.5e-6 & 1.3e-4 & 45.3 & 2.6e-6 & 1.2e-4 & 38.0 & 1.5e-6 & 1.8e-4 & 46.0 & 8.6e-6 & 2.9e-4 & 27.4 \\
    RS-1-Dir~\cite{montaut2022differentiable} & 2.2e-6 & 4.0e-4 & 45.6 & 1.7e-5 & 1.7e-3 & 28.3 & 3.8e-7 & 3.2e-4 & 55.6 & 2.8e-5 & 1.8e-3 & 22.1 \\
    \hline
    Ours & \textbf{4.3e-9} & \textbf{5.6e-7} & \textbf{91.1} & \textbf{6.5e-9} & \textbf{9.0e-6} & \textbf{80.3} & \textbf{5.4e-9} & \textbf{1.2e-6} & \textbf{89.4} & \textbf{1.0e-7} & \textbf{4.5e-5} & \textbf{61.8} \\
    \hline
    \end{tabular}
    \caption{\textbf{Quantitative comparison.} Our method significantly outperforms all baselines across datasets and geometries.}
    \label{tab: main}
\end{table*}

\subsection{Comparison with First-order Method in \cite{montaut2022differentiable}}
\label{sec: compare}
Our method is inspired by~\cite{montaut2022differentiable} and therefore shares some components, such as softmax smoothing and point sampling. However, both are redesigned in our work for robustness.

For softmax smoothing, their score value is defined as 
\begin{equation}
    u_j = \left\langle \mathbf{v}_{1,j}, \mathbf{y} \right\rangle,~~~~\mathbf{y} = \begin{cases}
    \mathbf{x}_2 - \mathbf{x}_1, & \text{if no penetration} \\
    \mathbf{x}_1 - \mathbf{x}_2, & \text{otherwise.}
    \end{cases}
\end{equation}
This \emph{direction-based smoothing} leverages the optimality conditions of the GJK/EPA algorithm but comes with several limitations. Most importantly, it only works reliably when both objects are strictly convex. Using their score vector, points $\mathbf{v}_{1,j}$ lying near a plane orthogonal to $\mathbf{y}$ become indistinguishable, and concave regions cannot be handled at all because GJK algorithm is not applicable. In addition, numerical issues arise when $\mathbf{x}_1$ is extremely close to $\mathbf{x}_2$, making $\mathbf{y}$ nearly zero and the direction ill-defined
\ifWithAppendix
(see Appendix~\ref{apd: contact margin})
\else
(see Appendix)
\fi 
. In contrast, our \emph{distance-based smoothing} extends naturally to general objects and avoids the numerical instability caused by vanishing directions.

For the sampling strategy, \cite{montaut2022differentiable} considers vertices within a fixed-level neighborhood (e.g., 3- or 5-ring) around the witness point, which is sensitive to mesh quality. In contrast, we address this issue by filtering candidates pre-sampled on object surfaces by their distance to the witness point.

Overall, our scheme (\textbf{Distance + Adaptive + EG}) substantially enhances robustness on general objects compared to the previous method~\cite{montaut2022differentiable} (\textbf{Direction + Neighbor}).
\begin{figure*}
    \centering
    \includegraphics[width=0.95\linewidth]{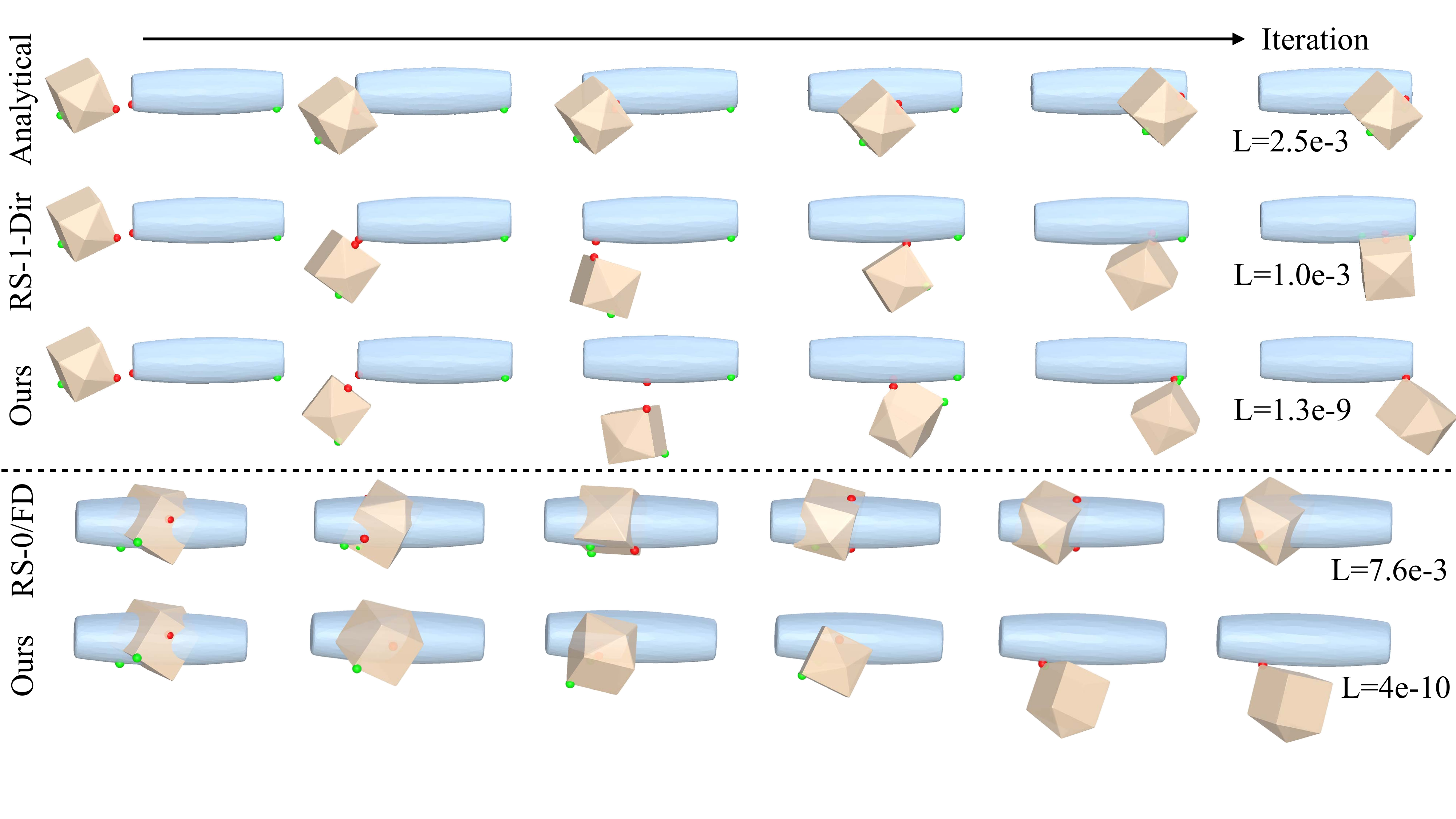}
    \caption{\textbf{Qualitative comparison on convex objects.} Different baselines exhibit distinct failure patterns: \textit{Analytical} often gets stuck due to zero derivatives at vertices (cols. 2, 5); \textit{RS-1-Dir} struggles to disambiguate vertices lying near a plane (col. 6); \textit{RS-0} and \textit{Finite Difference (FD)} often fail to resolve initial penetrations. In contrast, \textit{Ours} performs well in both scenarios.}
    \label{fig: convex exp}
\end{figure*}

\section{Experiments}

\subsection{Setup and Evaluation Metrics}

Our experimental setup differs from~\cite{montaut2022differentiable} in two key aspects. First, we evaluate on objects that are closer to real robotic applications, rather than rough convex shapes. Second, we use plain gradient descent without line search, to better highlight the quality of derivatives.

\textbf{Objects.} We randomly select assets from the 10k-object dataset used in Dexonomy~\cite{chen2025dexonomy}, where 5k are from DexGraspNet~\cite{wang2022dexgraspnet} and 5k from Objaverse~\cite{deitke2023objaverse}. Each object is scaled so its bounding box diagonal lies in $[0.01,0.2]$m. 

\textbf{Experimental Setup.} We evaluate two settings: (1) using the convex hull of each object (denoted as \textbf{convex}), and (2) applying CoACD~\cite{wei2022coacd} for convex decomposition (denoted as \textbf{concave}). For each setting, we fix 100 random object pairs shared across all methods. From each pair, 1,024 target point pairs $(\mathbf{t}_{1,o}, \mathbf{t}_{2,o})$ are sampled from mesh vertices or surfaces, resulting in 100k tasks per method. In each task, Object 1 is fixed while Object 2 is randomly initialized around it.

\textbf{Optimizer.} We run gradient descent for 2k iterations. To avoid numerical instability, pose derivatives are normalized, since they can be extremely large at the beginning and tiny near convergence. The rotation step size $s_r$ decays from $10 \rightarrow 1 \rightarrow 0.1$ at 200 and 1800 iterations, while the translation step size $s_t=s_r/100$.

\textbf{Implementation.} We use Coal~\cite{Pan_Coal_-_An_2025} for narrow-phase collision detection, parallelized with OpenMP~\cite{dagum1998openmp}. For concave objects, we add a broad-phase check using bounding spheres. Our proposed derivatives for witness points are implemented in PyTorch~\cite{paszke2019pytorch}, supporting batched processing of multiple target points on a GPU.

\textbf{Metrics.} (1) \textbf{D5}: median loss at the final iteration,  
(2) \textbf{D9}: ninth decile of final loss,  
(3) \textbf{Acc (\%)}: fraction of tasks with final loss $<$1e-6.  
For reference, since the loss sums three squared distances, a per-point displacement of 1 mm (1e-6 after squaring), yields a total loss of about 3e-6.

\begin{figure*}
    \centering
    \includegraphics[width=0.9\linewidth]{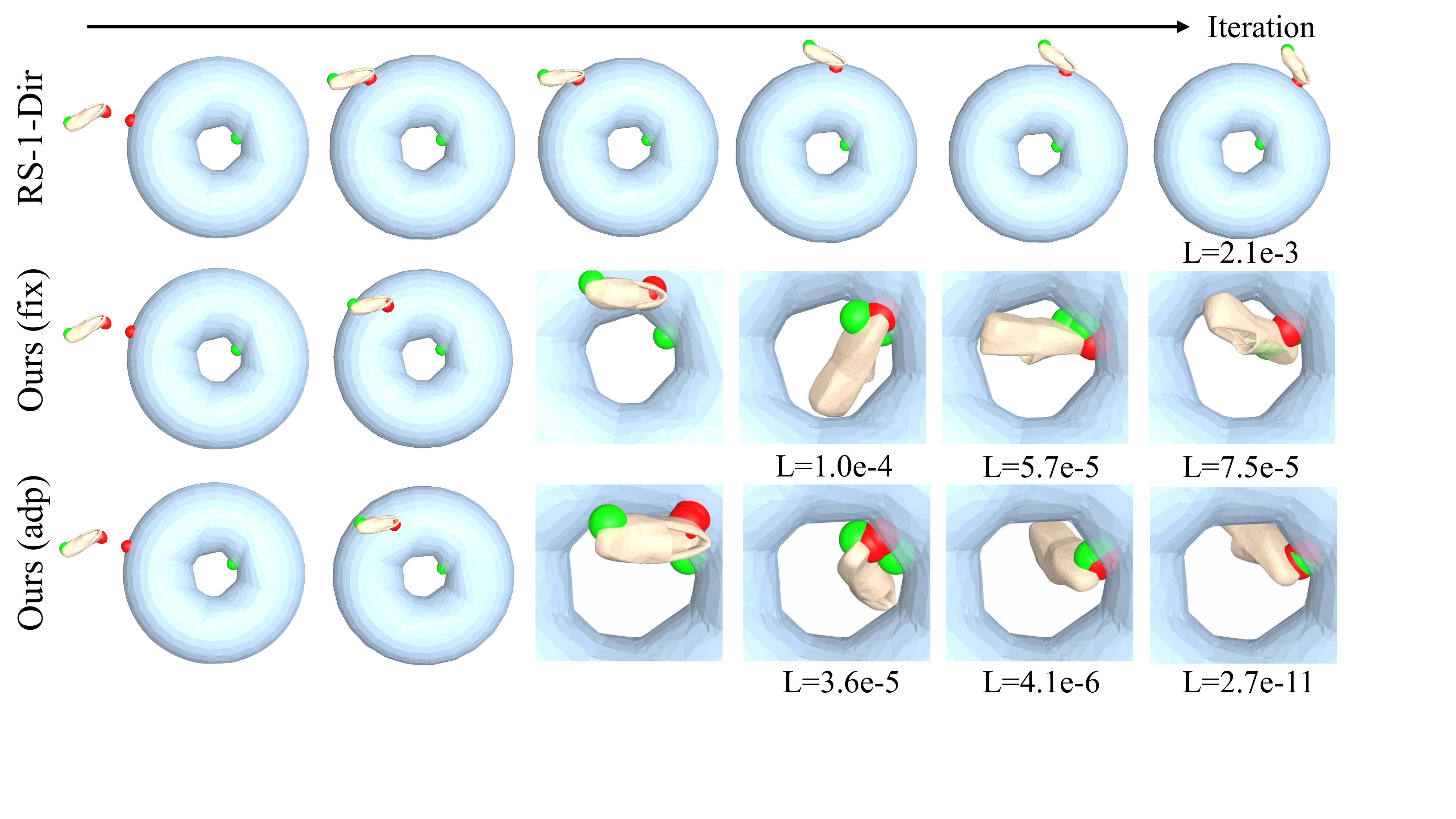}
    \caption{\textbf{Qualitative comparison on concave objects.} \textit{RS-1-Dir} fails dramatically when the target points lie on concave regions, regardless of the sampling strategy. Our proposed \textit{RS-1-Dist} with adaptive sampling can converges well.}
    \label{fig: concave exp}
    \vspace{-2mm}
\end{figure*}

\subsection{Comparison with Baselines}

We compare against four baselines:  
\paragraph{Analytical} computes vertex–face derivatives at witness points, representing a special case of brute-force vertex–face and edge–edge check. Since brute-force cannot find witness points during penetration, we still use GJK as the forward function. In practice, witness points rarely lie on edge–edge pairs, possibly due to forward-backward mismatch and numerical issues, so only vertex–face derivatives are considered.
\paragraph{Finite Difference} perturbs each input dimension and computes central differences. 
\paragraph{RS-0} 0th-order random smoothing~\cite{montaut2022differentiable}, which can be viewed as an enhanced finite difference using Gaussian samples as perturbations. Due to its computational cost, we run RS-0 and Finite Difference with only 128 target points (1/8 of the other settings) to avoid excessive runtime.
\paragraph{RS-1-Dir} 1st-order direction-based random smoothing with neighbor-based sampling~\cite{montaut2022differentiable}. To make it parallel-friendly, neighboring vertices within the fifth level set are randomly subsampled to maintain a fixed number per batch.

As shown in Table~\ref{tab: main}, our method clearly outperforms all baselines. We achieve a median error below 0.1 mm across all tasks and outperform the baselines by more than $40\%$ in terms of mm-level accuracy. 

Among baselines, \textbf{Analytical} performs the worst and often gets stuck on vertices, as shown in Figure~\ref{fig: convex exp}. This occurs because witness points located on vertices have zero derivatives—small pose changes do not alter the witness. Even when the derivatives are nonzero, they depend on a single face, making the results highly sensitive to mesh quality.

\textbf{RS-0} outperforms \textbf{Finite Difference} due to more diverse perturbations providing better derivative estimates. However, they often fail to resolve initial penetrations, as shown in Figure~\ref{fig: convex exp}. Moreover, both methods are extremely time-consuming (Section~\ref{sec: time}) and thus limiting practical use. We also find that these two methods are very sensitive to the optimization step size
\ifWithAppendix
(see Appendix~\ref{apd: step size})
\else
(See Appendix)
\fi
.

\textbf{RS-1-Dir} performs better than RS-0 and FD on convex objects but worse on concave ones, consistent with our analysis that direction-based randomized smoothing is inherently limited to convex geometries.  Moreover, we identify another major limitation of direction-based smoothing—its strong sensitivity to the contact margin 
\ifWithAppendix
(see Appendix~\ref{apd: contact margin})
\else
(see Appendix)
\fi
. The results in Table~\ref{tab: main} are reported under its best-performing configuration, i.e., with a contact margin of $\beta = 10^{-3}$.

\begin{table}[]
    \centering
    \begin{tabular}{l|l|cccc}
    \hline
    \multicolumn{2}{l|}{
    \multirow{2}{*}{Acc@1e-6 (\%) $\uparrow$}} & \multicolumn{2}{c}{DexGraspNet}  & \multicolumn{2}{c}{Objaverse} \\
    \multicolumn{2}{l|}{} & Convex & Concave & Convex & Concave\\
    \hline 
    \multirow{3}{*}{Dir} 
    & Neighbor & 53.7 & 37.9 & 57.9 & 26.1 \\
    & Fixed & 63.2 & 52.5 & 64.0 & 37.8 \\
    & Adaptive & 62.2 & 55.8 & 61.4 & 41.9 \\
    \hline
    \multirow{3}{*}{Dist} 
    & Neighbor & 85.6 & 70.2 & 88.5 & 53.1 \\
    & Fixed & 89.9 & 75.1 & 91.0 & 55.8 \\
    & Adaptive & \textbf{91.2} & \textbf{80.6} & \textbf{89.5} & \textbf{62.4} \\
    \hline
    \hline
    \multirow{3}{*}{Dir} 
    & $T_1, T_2$ & 71.2 & 63.3 & 68.9 & 46.1 \\
    & $T_2$(w/o EG) & 29.8 & 26.6 & 33.4 & 21.3 \\
    & $T_2$(w/ EG) & 62.2 & 55.8 & 61.4 & 41.9 \\
    \hline
    \multirow{3}{*}{Dist} 
    & $T_1, T_2$ & 90.9 & \textbf{80.6} & \textbf{89.5} & \textbf{62.6} \\
    & $T_2$(w/o EG) & 70.8 & 64.4 & 73.3 & 50.4 \\
    & $T_2$(w/ EG) & \textbf{91.2} & \textbf{80.6} & \textbf{89.5} & 62.4 \\
    \hline
    \end{tabular}
    \caption{\textbf{Ablation study.} (1) \textit{Smoothing:} our distance-based formulation consistently outperforms the direction-based baseline~\cite{montaut2022differentiable}. (2) \textit{Sampling:} adaptive $\alpha$ improves accuracy, especially for concave objects. (3) \textit{Equivalent gradient (EG):} when only $T_2$ is optimized, EG recovers performance comparable to jointly optimizing $T_1$ and $T_2$.}
    \label{tab:ablation}
\end{table}

\subsection{Ablation Study}

In this section, we evaluate three design choices: 
\begin{itemize}
    \item \textbf{Smoothing strategy}: direction-based smoothing~\cite{montaut2022differentiable} vs. distance-based smoothing (ours)
    \item \textbf{Sampling}: neighbor-based~\cite{montaut2022differentiable}, fixed $\alpha$, adaptive $\alpha$
    \item \textbf{Equivalent gradient (EG)}: optimizing both $T_1, T_2$ vs. only $T_2$ with/without EG
\end{itemize}
The default setting is Distance + Adaptive + $T_2$(w/ EG). Since direction-based smoothing is highly sensitive to the contact margin, we report all results in this section as the average performance under two contact margin settings, $\beta = 0$ and $\beta = 10^{-3}$
\ifWithAppendix
(see Appendix~\ref{apd: contact margin} for details)
\else
(see Appendix for details)
\fi
.

Table~\ref{tab:ablation} shows three key findings.  
First, our distance-based smoothing consistently outperforms the direction-based baseline across all settings, even when the baseline is enhanced with our adaptive sampling and equivalent gradient transport. As shown in Figure~\ref{fig: concave exp}, direction-based smoothing often fails dramatically on challenging concave objects. A possible reason the baseline does not completely fail on concave objects is that some randomly sampled target points happen to lie on locally convex regions.

Second, neighbor-based sampling performs poorly, likely due to its sensitivity to mesh quality, making it unsuitable for in-the-wild objects. Both fixed and adaptive $\alpha$ perform well on convex shapes, while adaptive sampling is especially beneficial for concave ones.

Finally, equivalent gradient transport is essential when only $T_2$ is optimized. Without EG, performance degrades due to the mismatch between the current and target contact point on object 1, since moving an object itself typically requires smaller pose changes than moving another object. EG compensates for this effect and restores performance to the level of jointly optimizing $T_1$ and $T_2$.

\begin{table}[]
    \vspace{2mm}
    \centering
    \begin{tabular}{l|cccc}
    \hline
    \multirow{2}{*}{Time (\textmu s)}   & \multicolumn{2}{c}{DexGraspNet}  & \multicolumn{2}{c}{Objaverse} \\
     & Convex & Concave & Convex & Concave\\
    \hline 
       Forward & 8.3 & 23.4 & 8.7 & 29.2 \\
     Backward (RS-0 family) & 58.6 & 488.3 & 59.3 & 1269.5\\
      Backward (RS-1 family) & 17.7 & 19.6 & 18.9 & 21.4 \\
    \hline 
    \end{tabular}
    \caption{\textbf{Runtime analysis.} RS-1 family (Ours) is much more efficient than RS-0, especially for concave objects.}
    \label{tab: time all}
    \vspace{-1mm}
\end{table}

\begin{table}[]
    \centering
    \begin{tabular}{c|ccc|cc|c}
    \hline 
    & \multicolumn{3}{c|}{Sampling} & \multicolumn{2}{c|}{Derivative} & \multirow{2}{*}{EG} \\
        & Neighbor & Fixed & Adaptive & Dir & Dist &  \\
    \hline 
        Time (\textmu s) & 13.5 & 9.8 & 10.4 & 5.7 & 4.5 & 1.1 \\
    \hline 
    \end{tabular}
    \caption{\textbf{Runtime breakdown of backward components.} Neighbor-based sampling runs in C++ with OpenMP on CPU; all other components run in PyTorch on GPU.}
    \label{tab: time detail}
    \vspace{-1mm}
\end{table}

\subsection{Runtime Analysis}
\label{sec: time}
We evaluate five methods grouped into two families, reporting representative runtimes in Table~\ref{tab: time all} since intra-family variances are negligible. The \textbf{RS-0 family} (Finite Difference, RS-0) is computationally expensive as each backward pass requires numerous forward collision detections. This becomes prohibitive for complex concave objects, such as those in the Objaverse dataset. Conversely, the \textbf{RS-1 family} (Analytical, RS-1-Dir, and \textbf{Ours}) maintains consistent efficiency regardless of mesh complexity. All benchmarks were performed using an Nvidia 3090 GPU and Intel Xeon Platinum 8255C CPU. Table~\ref{tab: time detail} shows a detailed runtime breakdown of each backward component. Sampling is time-consuming as it computes distances from the witness point to the entire pre-sampled candidate set, with complexity linear in the number of candidates. However, this operation is easily parallelizable on modern GPUs, mitigating the overhead.

\subsection{Preliminary Results on Dexterous Grasp Synthesis}
\label{sec: applic}
To further demonstrate the applicability of our method to downstream robotic tasks, we conduct a preliminary study on dexterous grasp synthesis. A grasp generated by Dexonomy~\cite{chen2025dexonomy} is used as initialization, and target contact point pairs are manually annotated on the object and fingertips. The optimizable variables include the hand 6D root pose and joint angles, with the loss term defined in Eq.~\ref{eq: loss}. Without any modification, our method allows gradients to propagate from the loss through the fingertip poses—computed via differentiable forward kinematics—back to the root pose and joint angles in an end-to-end manner.

As shown in Figure~\ref{fig: application}, our method successfully refines the grasp to make all annotated target points in contact. The current formulation relies on manually annotated target points, which limits scalability. However, since our method is not restricted to a specific loss (particularly in the fixed sampling setting), it could be combined with differentiable grasp quality metrics~\cite{chen2024task, chen2025bodex} to eliminate the need for manual annotation. We leave this integration for future work.  

\begin{figure}
    \vspace{2mm}
    \centering
    \includegraphics[width=\linewidth]{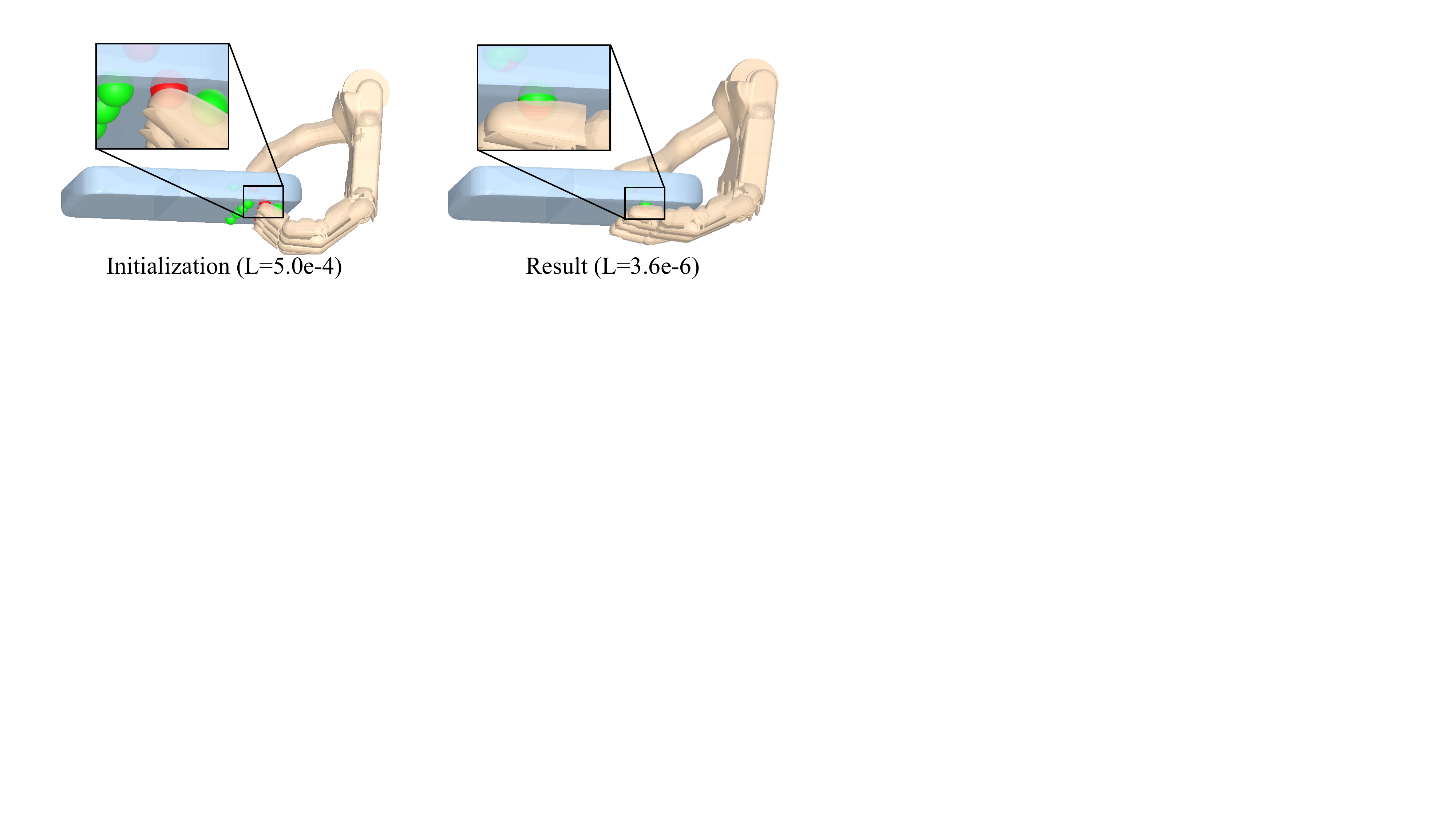}
    \caption{\textbf{Preliminary grasp refinement with our method.} Starting from a grasp generated by Dexonomy~\cite{chen2025dexonomy} and manually annotated target contact points, our method backpropagates gradients from Eq.~\ref{eq: loss} to the hand root pose and joint angles, successfully bringing the targets into contact.}
    \label{fig: application}
\end{figure}

\begin{figure}
    \vspace{2mm}
    \centering
    \includegraphics[width=\linewidth]{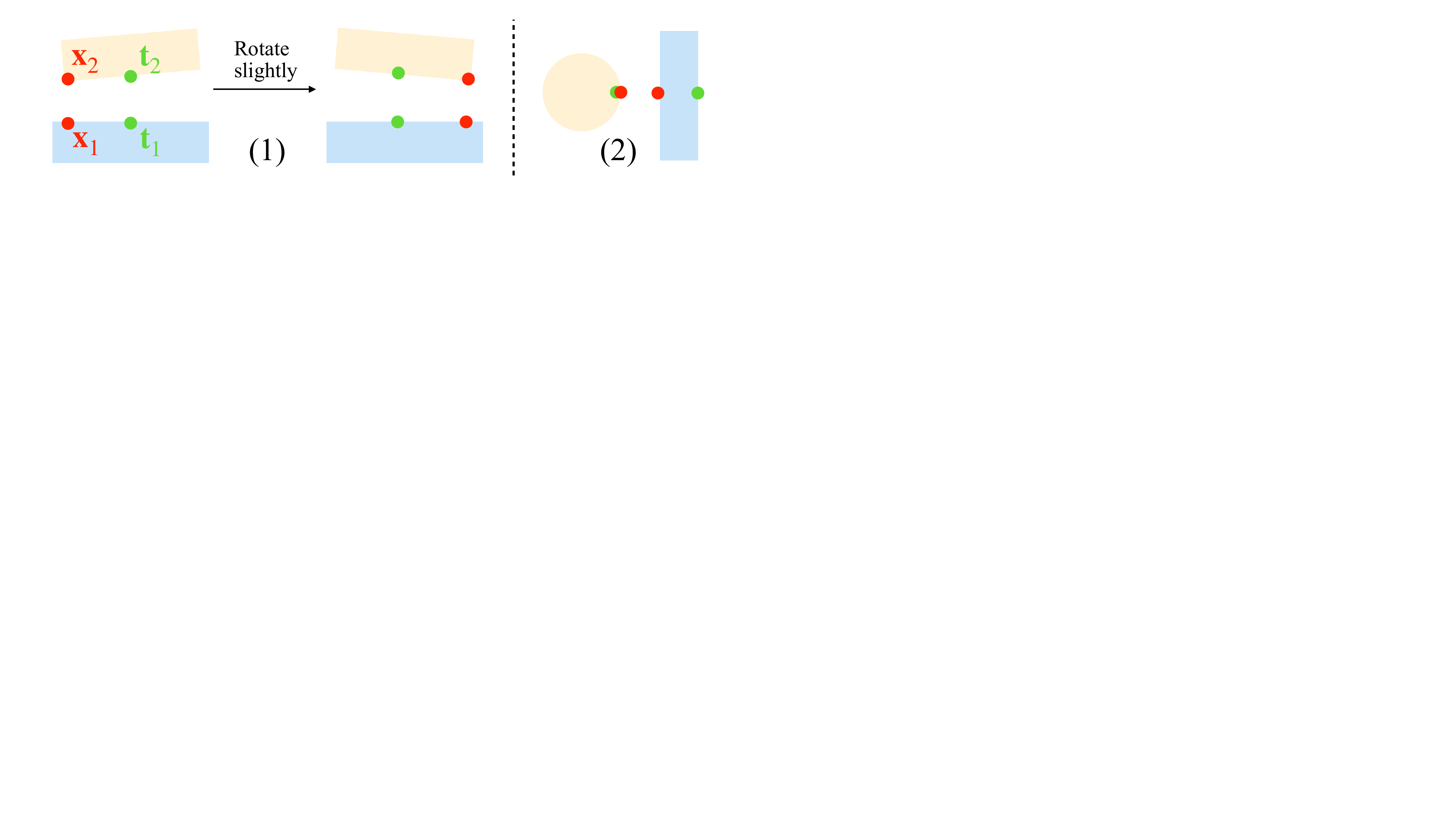}
    \caption{\textbf{Failure cases.} (1) Discontinuity of point contact model for face-face contact. (2) A local minimum of the loss term that traps the optimization.}
    \label{fig: failure case}
\end{figure}

\section{Limitations and Future Works}
Our method still faces several limitations. First, when both target points lie on planar faces, convergence to low loss is difficult because face–face contact is inherently discontinuous in the point–contact model (Figure~\ref{fig: failure case}, Left). Hydroelastic contact models~\cite{elandt2019pressure, masterjohn2022velocity} used in Drake~\cite{drake} provide a smoother alternative, though they currently do not extend to non-contact scenarios and incur higher computational cost. Second, optimization may become trapped in local minima when the target point lies on the opposite side of the object (Figure~\ref{fig: failure case}, Right). Finally, while our experiments demonstrate robustness on a toy problem, further validation is needed to assess applicability in gradient-based pipelines for tasks like dexterous grasping and manipulation.

\section{Conclusions}
In this work, we presented a general approach for differentiating witness points from collision detection, extending beyond convex shapes to arbitrary meshes. Experiments on standard object datasets showed robustness across object scale, configuration, and mesh complexity. We further provided preliminary results on applying our method to a downstream task, dexterous grasp refinement. Future work will focus on integrating our method into planning and control frameworks to support downstream contact-rich tasks in robotic grasping and manipulation.

\ifWithAppendix
\else
\section*{Appendix}
We evaluate the following parameter variations to further validate our method's robustness against hyperparameters:
\begin{itemize}
    \item \textbf{Contact Margin ($\beta$):} We modify Eq.~\ref{eq: loss} to
    \begin{equation}
    \mathcal{L} = \|\mathbf{x}_1 - \mathbf{x}_2+\beta\mathbf{n}\|^2 
      + \|\mathbf{x}_1 - \mathbf{t}_1\|^2 
      + \|\mathbf{x}_2 - \mathbf{t}_2\|^2,
\end{equation}
where $\beta \in [0, 10^{-2}]$ is the contact margin and $\mathbf{n}$ is the contact normal. The direction-based baseline fails when $\beta$ is small, , whereas our method remains highly stable.
    \item \textbf{Optimization Step Size ($s_r, s_t$):} Our method demonstrates significantly less sensitivity to step size variations compared to the baselines.
\end{itemize}
Due to space constraints, detailed experimental setups and results are provided in the appendix of the \href{https://arxiv.org/pdf/2511.06267}{arXiv version}.
\fi

\bibliographystyle{IEEEtran}
\bibliography{IEEEabrv}

@article{gilbert2002fast,
  title={A fast procedure for computing the distance between complex objects in three-dimensional space},
  author={Gilbert, Elmer G and Johnson, Daniel W and Keerthi, S Sathiya},
  journal={IEEE Journal on Robotics and Automation},
  volume={4},
  number={2},
  pages={193--203},
  year={2002},
  publisher={IEEE}
}

@inproceedings{van2001proximity,
  title={Proximity queries and penetration depth computation on 3d game objects},
  author={Van Den Bergen, Gino},
  booktitle={Game developers conference},
  volume={170},
  pages={209},
  year={2001}
}

@inproceedings{snethen2008xenocollide,
  title={Xenocollide: Complex collision made simple},
  author={Snethen, Gary},
  booktitle={Game programming Gems 7},
  pages={165--178},
  year={2008},
  organization={Course Technology}
}

@inproceedings{todorov2012mujoco,
  title={Mujoco: A physics engine for model-based control},
  author={Todorov, Emanuel and Erez, Tom and Tassa, Yuval},
  booktitle={2012 IEEE/RSJ international conference on intelligent robots and systems},
  pages={5026--5033},
  year={2012},
  organization={IEEE}
}

@article{montaut2024gjk++,
  title={GJK++: Leveraging acceleration methods for faster collision detection},
  author={Montaut, Louis and Le Lidec, Quentin and Petrik, Vladimir and Sivic, Josef and Carpentier, Justin},
  journal={IEEE Transactions on Robotics},
  volume={40},
  pages={2564--2581},
  year={2024},
  publisher={IEEE}
}

@article{le2024contact,
  title={Contact models in robotics: a comparative analysis},
  author={Le Lidec, Quentin and Jallet, Wilson and Montaut, Louis and Laptev, Ivan and Schmid, Cordelia and Carpentier, Justin},
  journal={IEEE Transactions on Robotics},
  year={2024},
  publisher={IEEE}
}

@inproceedings{SiggraphContact22,
  author = {Andrews, Sheldon and Erleben, Kenny and Ferguson, Zachary},
  title = {Contact and Friction Simulation for Computer Graphics},
  year = {2022},
  publisher = {Association for Computing Machinery},
  address = {New York, NY, USA},
  url = {},
  doi = {},
  booktitle = {ACM SIGGRAPH 2022 Courses},
  articleno = {2},
  numpages = {124},
  location = {Hybrid Event, Vancouver, Canada},
  series = {SIGGRAPH '22}
}

@inproceedings{elandt2019pressure,
  title={A pressure field model for fast, robust approximation of net contact force and moment between nominally rigid objects},
  author={Elandt, Ryan and Drumwright, Evan and Sherman, Michael and Ruina, Andy},
  booktitle={2019 IEEE/RSJ International Conference on Intelligent Robots and Systems (IROS)},
  pages={8238--8245},
  year={2019},
  organization={IEEE}
}

@article{masterjohn2022velocity,
  title={Velocity level approximation of pressure field contact patches},
  author={Masterjohn, Joseph and Guoy, Damrong and Shepherd, John and Castro, Alejandro},
  journal={IEEE Robotics and Automation Letters},
  volume={7},
  number={4},
  pages={11593--11600},
  year={2022},
  publisher={IEEE}
}

@misc{drake,
 author = "Russ Tedrake and the Drake Development Team",
 title = "Drake: Model-based design and verification for robotics",
 year = 2019,
 url = "https://drake.mit.edu"
}

@book{ericson2004real,
  title={Real-time collision detection},
  author={Ericson, Christer},
  year={2004},
  publisher={Crc Press}
}

@article{tracy2022differentiable,
  title={Differentiable collision detection for a set of convex primitives},
  author={Tracy, Kevin and Howell, Taylor A and Manchester, Zachary},
  journal={arXiv preprint arXiv:2207.00669},
  year={2022}
}

@article{montaut2022differentiable,
  title={Differentiable collision detection: a randomized smoothing approach},
  author={Montaut, Louis and Lidec, Quentin Le and Bambade, Antoine and Petrik, Vladimir and Sivic, Josef and Carpentier, Justin},
  journal={arXiv preprint arXiv:2209.09012},
  year={2022}
}

@article{howell2022,
  title={{Predictive Sampling: Real-time Behaviour Synthesis with MuJoCo}},
  author={Howell, Taylor and Gileadi, Nimrod and Tunyasuvunakool, Saran and Zakka, Kevin and Erez, Tom and Tassa, Yuval},
  journal={arXiv preprint arXiv:2212.00541},
  year={2022},
}

@article{chen2023visual,
  title={Visual dexterity: In-hand reorientation of novel and complex object shapes},
  author={Chen, Tao and Tippur, Megha and Wu, Siyang and Kumar, Vikash and Adelson, Edward and Agrawal, Pulkit},
  journal={Science Robotics},
  volume={8},
  number={84},
  pages={eadc9244},
  year={2023},
  publisher={American Association for the Advancement of Science}
}

@article{pang2023global,
  title={Global planning for contact-rich manipulation via local smoothing of quasi-dynamic contact models},
  author={Pang, Tao and Suh, HJ Terry and Yang, Lujie and Tedrake, Russ},
  journal={IEEE Transactions on robotics},
  volume={39},
  number={6},
  pages={4691--4711},
  year={2023},
  publisher={IEEE}
}

@article{jin2024complementarity,
  title={Complementarity-free multi-contact modeling and optimization for dexterous manipulation},
  author={Jin, Wanxin},
  journal={arXiv preprint arXiv:2408.07855},
  year={2024}
}

@article{rajeswaran2017learning,
  title={Learning complex dexterous manipulation with deep reinforcement learning and demonstrations},
  author={Rajeswaran, Aravind and Kumar, Vikash and Gupta, Abhishek and Vezzani, Giulia and Schulman, John and Todorov, Emanuel and Levine, Sergey},
  journal={arXiv preprint arXiv:1709.10087},
  year={2017}
}

@article{wang2022dexgraspnet,
  title={Dexgraspnet: A large-scale robotic dexterous grasp dataset for general objects based on simulation},
  author={Wang, Ruicheng and Zhang, Jialiang and Chen, Jiayi and Xu, Yinzhen and Li, Puhao and Liu, Tengyu and Wang, He},
  journal={arXiv preprint arXiv:2210.02697},
  year={2022}
}

@article{chen2025dexonomy,
  title={Dexonomy: Synthesizing All Dexterous Grasp Types in a Grasp Taxonomy},
  author={Chen, Jiayi and Ke, Yubin and Peng, Lin and Wang, He},
  journal={arXiv preprint arXiv:2504.18829},
  year={2025}
}

@inproceedings{deitke2023objaverse,
  title={Objaverse: A universe of annotated 3d objects},
  author={Deitke, Matt and Schwenk, Dustin and Salvador, Jordi and Weihs, Luca and Michel, Oscar and VanderBilt, Eli and Schmidt, Ludwig and Ehsani, Kiana and Kembhavi, Aniruddha and Farhadi, Ali},
  booktitle={Proceedings of the IEEE/CVF conference on computer vision and pattern recognition},
  pages={13142--13153},
  year={2023}
}

@software{Pan_Coal_-_An_2025,
author = {Pan, Jia and Chitta, Sachin and Manocha, Dinesh and Mirabel, Joseph and Carpentier, Justin and Montaut, Louis},
license = {BSD-2-Clause},
month = feb,
title = {{Coal - An extension of the Flexible Collision Library}},
url = {https://github.com/coal-library/coal},
version = {3.0.1},
year = {2025}
}

@article{wei2022coacd,
  title={Approximate convex decomposition for 3d meshes with collision-aware concavity and tree search},
  author={Wei, Xinyue and Liu, Minghua and Ling, Zhan and Su, Hao},
  journal={ACM Transactions on Graphics (TOG)},
  volume={41},
  number={4},
  pages={1--18},
  year={2022},
  publisher={ACM New York, NY, USA}
}

@misc{coumans2016pybullet,
  title={Pybullet, a python module for physics simulation for games, robotics and machine learning},
  author={Coumans, Erwin and Bai, Yunfei},
  year={2016}
}

@inproceedings{mamou2009simple,
  title={A simple and efficient approach for 3D mesh approximate convex decomposition},
  author={Mamou, Khaled and Ghorbel, Faouzi},
  booktitle={2009 16th IEEE international conference on image processing (ICIP)},
  pages={3501--3504},
  year={2009},
  organization={IEEE}
}

@article{li2020incremental,
  title={Incremental potential contact: intersection-and inversion-free, large-deformation dynamics.},
  author={Li, Minchen and Ferguson, Zachary and Schneider, Teseo and Langlois, Timothy R and Zorin, Denis and Panozzo, Daniele and Jiang, Chenfanfu and Kaufman, Danny M},
  journal={ACM Trans. Graph.},
  volume={39},
  number={4},
  pages={49},
  year={2020}
}

@article{gottschalk1996separating,
  title={Separating axis theorem},
  author={Gottschalk, Stefan},
  year={1996},
  publisher={Technical Report TR96-024, Department of Computer Science, UNC Chapel Hill}
}

@article{sola2018micro,
  title={A micro lie theory for state estimation in robotics},
  author={Sola, Joan and Deray, Jeremie and Atchuthan, Dinesh},
  journal={arXiv preprint arXiv:1812.01537},
  year={2018}
}

@article{paszke2019pytorch,
  title={Pytorch: An imperative style, high-performance deep learning library},
  author={Paszke, Adam and Gross, Sam and Massa, Francisco and Lerer, Adam and Bradbury, James and Chanan, Gregory and Killeen, Trevor and Lin, Zeming and Gimelshein, Natalia and Antiga, Luca and others},
  journal={Advances in neural information processing systems},
  volume={32},
  year={2019}
}

@article{dagum1998openmp,
  title={OpenMP: an industry standard API for shared-memory programming},
  author={Dagum, Leonardo and Menon, Ramesh},
  journal={IEEE computational science and engineering},
  volume={5},
  number={1},
  pages={46--55},
  year={1998},
  publisher={IEEE}
}

@misc{warp2022,
  title        = {Warp: A High-performance Python Framework for GPU Simulation and Graphics},
  author       = {Miles Macklin},
  month        = {March},
  year         = {2022},
  note         = {{NVIDIA} GPU Technology Conference (GTC)},
  howpublished = {\url{https://github.com/nvidia/warp}}
}

@article{freeman2021brax,
  title={Brax--a differentiable physics engine for large scale rigid body simulation},
  author={Freeman, C Daniel and Frey, Erik and Raichuk, Anton and Girgin, Sertan and Mordatch, Igor and Bachem, Olivier},
  journal={arXiv preprint arXiv:2106.13281},
  year={2021}
}

@article{de2018end,
  title={End-to-end differentiable physics for learning and control},
  author={de Avila Belbute-Peres, Filipe and Smith, Kevin and Allen, Kelsey and Tenenbaum, Josh and Kolter, J Zico},
  journal={Advances in neural information processing systems},
  volume={31},
  year={2018}
}

@inproceedings{chen2024task,
  title={Task-oriented dexterous hand pose synthesis using differentiable grasp wrench boundary estimator},
  author={Chen, Jiayi and Chen, Yuxing and Zhang, Jialiang and Wang, He},
  booktitle={2024 IEEE/RSJ International Conference on Intelligent Robots and Systems (IROS)},
  pages={5281--5288},
  year={2024},
  organization={IEEE}
}

@inproceedings{chen2025bodex,
  title={Bodex: Scalable and efficient robotic dexterous grasp synthesis using bilevel optimization},
  author={Chen, Jiayi and Ke, Yubin and Wang, He},
  booktitle={2025 IEEE International Conference on Robotics and Automation (ICRA)},
  pages={01--08},
  year={2025},
  organization={IEEE}
}

\ifWithAppendix
\appendix
\subsection{Introduction}

In the main paper, we reported near-optimal settings for the baselines—contact margin $\beta = 10^{-3}$ and step size $s_r = 10$ for rotation and $s_t = 0.01$ for translation—to provide a reasonably fair comparison. However, baselines are generally much more sensitive to these two hyperparameters than our method, so the actual performance gap is often larger than what is reported. In this appendix, we present additional experiments that systematically vary contact margin and step size, highlighting the robustness of our method across a broader range of conditions.

\begin{figure}[h]
\centering
\includegraphics[width=\linewidth]{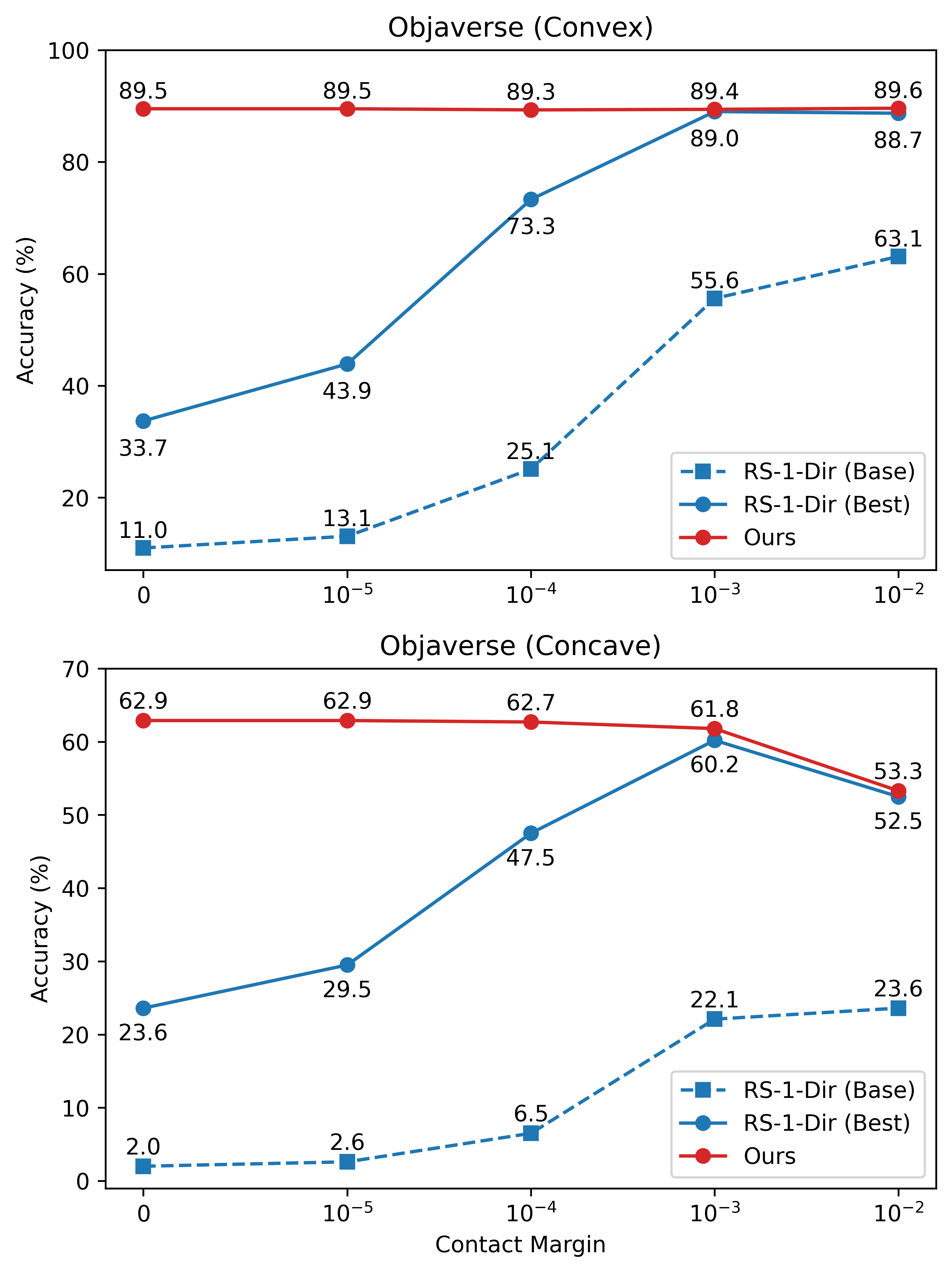}
\caption{\textbf{Robustness to contact margin.} Our method is robust across different contact margins, whereas RS-1-Dir requires a sufficiently large margin for reasonable performance, even with adaptive sampling and equivalent gradient.}
\label{fig: contact margin}
\end{figure}

\subsection{Robustness to Contact Margin}

\label{apd: contact margin}

As discussed in Section~\ref{sec: compare}, direction-based smoothing can suffer from numerical instability when two witness points are very close—a situation common in robotics applications. Here, we provide experimental evidence supporting this claim and further demonstrate the robustness of our approach.

We first introduce a modified loss function based on Eq.~\ref{eq: loss} that incorporates the contact margin:
\begin{equation}
L(T_1, T_2) = \|\mathbf{x}_1 - \mathbf{x}_2 + \beta \mathbf{n}\|^2
+ \|\mathbf{x}_1 - \mathbf{t}_1\|^2
+ \|\mathbf{x}_2 - \mathbf{t}_2\|^2,
\label{eq: new loss}
\end{equation}
where the contact normal $\mathbf{n}$ is defined as
\begin{equation}
\mathbf{n} = \frac{\mathbf{x}_2 - \mathbf{x}_1}{\|\mathbf{x}_2 - \mathbf{x}_1\|},
\end{equation}
and $\beta$ is a positive hyperparameter representing the contact margin. When $\beta=0$, this formulation reduces to Eq.~\ref{eq: loss}.

\textbf{Experiment setup}.
To evaluate the effect of different contact margins, we vary $\beta$ over ${0, 10^{-5}, 10^{-4}, 10^{-3}, 10^{-2}}$ (in meters). On the Objaverse dataset, we compare our method with two variants of RS-1-Dir~\cite{montaut2022differentiable}::
\begin{itemize}
\item \textbf{Base}: Direction + Neighbor. The original implementation from~\cite{montaut2022differentiable}.
\item \textbf{Best}: Direction + Adaptive + EG. Enhanced with our adaptive sampling and equivalent gradient transport.
\end{itemize}

\textbf{Results and analysis}.
Figure~\ref{fig: contact margin} shows that our method maintains stable high performance across all contact margins for both convex and concave objects. RS-1-Dir, however, performs extremely poorly when $\beta=0$. As $\beta$ increases, direction vanishing is alleviated and performance improves, but it remains consistently below our method. Other baselines (i.e., FD and RS0) are less sensitive to contact margin and are not included here.

An interesting observation occurs at $\beta = 10^{-2}$, where both our method and RS-1-Dir (Best) show performance drops on concave objects. This is due to increased discontinuity of witness points in concave regions: a large $\beta$ can make some target point pairs ill-posed. As illustrated in Figure~\ref{fig: contact margin concave}, if a concave region of object $O_1$ has radius $r$, then when $\beta > r$, witness points are forced to the sharp corners rather than inside the concave region, making some correspondences impossible and reducing performance.

\begin{figure}[t]
\centering
\includegraphics[width=\linewidth]{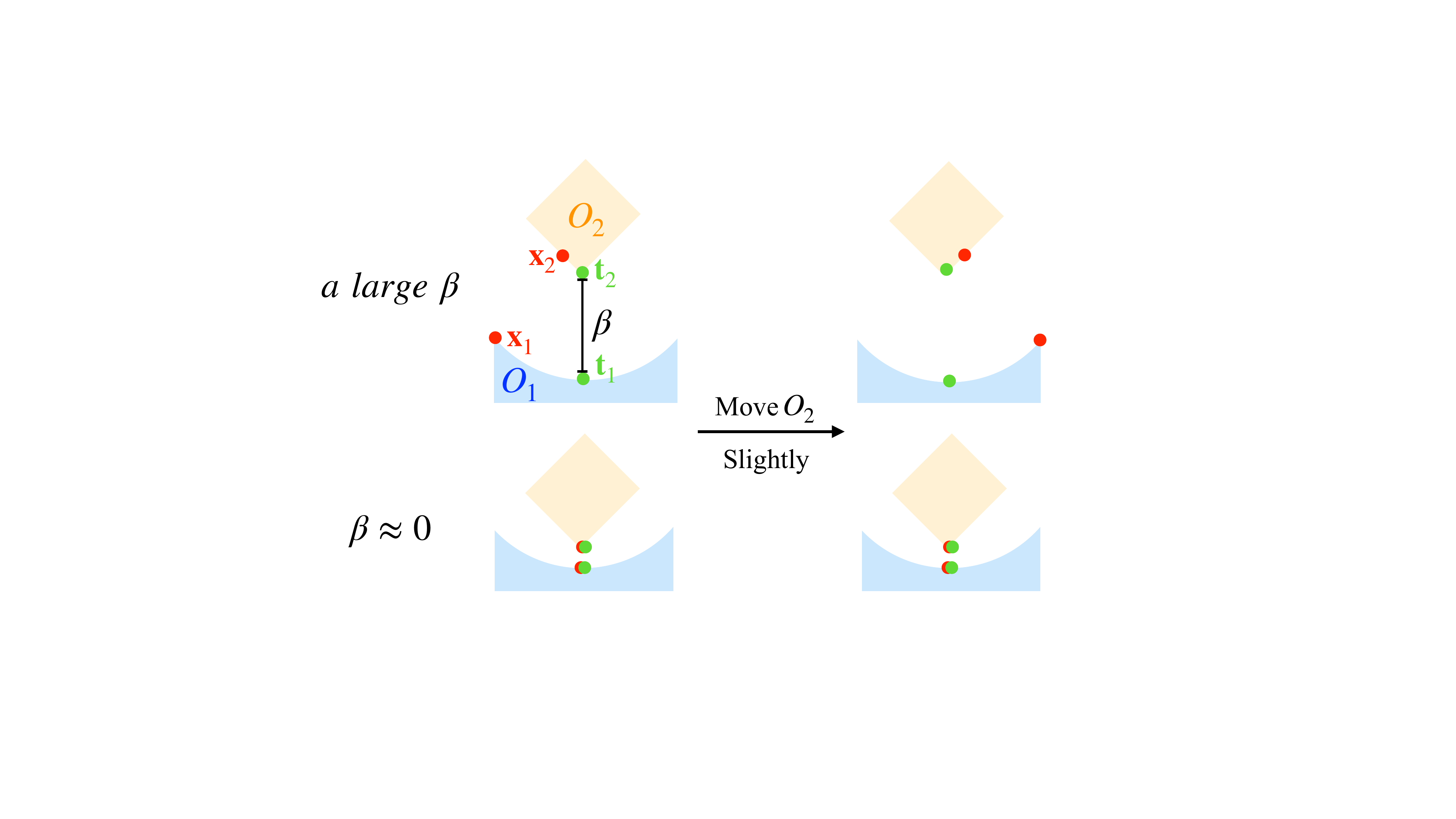}
\caption{\textbf{Performance drop for large contact margins on concave objects.} A large contact margin increases discontinuity in witness points, making some target points unreachable.}
\label{fig: contact margin concave}
\end{figure}

\subsection{Robustness to Step Size}
\label{apd: step size}

We also investigate the effect of optimization step size on performance.

\textbf{Experiment setup}. We vary the initial rotation step size $s_r$ across ${1, 2, 5, 10, 20, 50, 100}$, with translation step size $s_t = s_r / 100$. In addition to the two variants of RS-1-Dir, we include FD and RS-0, each with \textbf{Base} (original) and \textbf{Best} (enhanced with equivalent gradient). The contact margin is fixed at $10^{-3}$ to maximize RS-1-Dir performance.

\textbf{Result and analysis} Figure~\ref{fig: step size} shows that our method is the most robust to varying step sizes. The equivalent gradient consistently improves performance in almost all settings, with larger benefits for first-order methods than zero-order methods.

Baselines are more sensitive to step size, likely because it interacts with other hyperparameters that must be tuned jointly. For example, for the least robust baseline RS-0, the perturbation range is closely tied to the step size and must be adjusted accordingly, which complicates turning. In contrast, our method maintains consistently high performance across a wide range of step sizes without modifying any additional parameters, further demonstrating its robustness.

\begin{figure}[H]
    \centering
    \includegraphics[width=\linewidth]{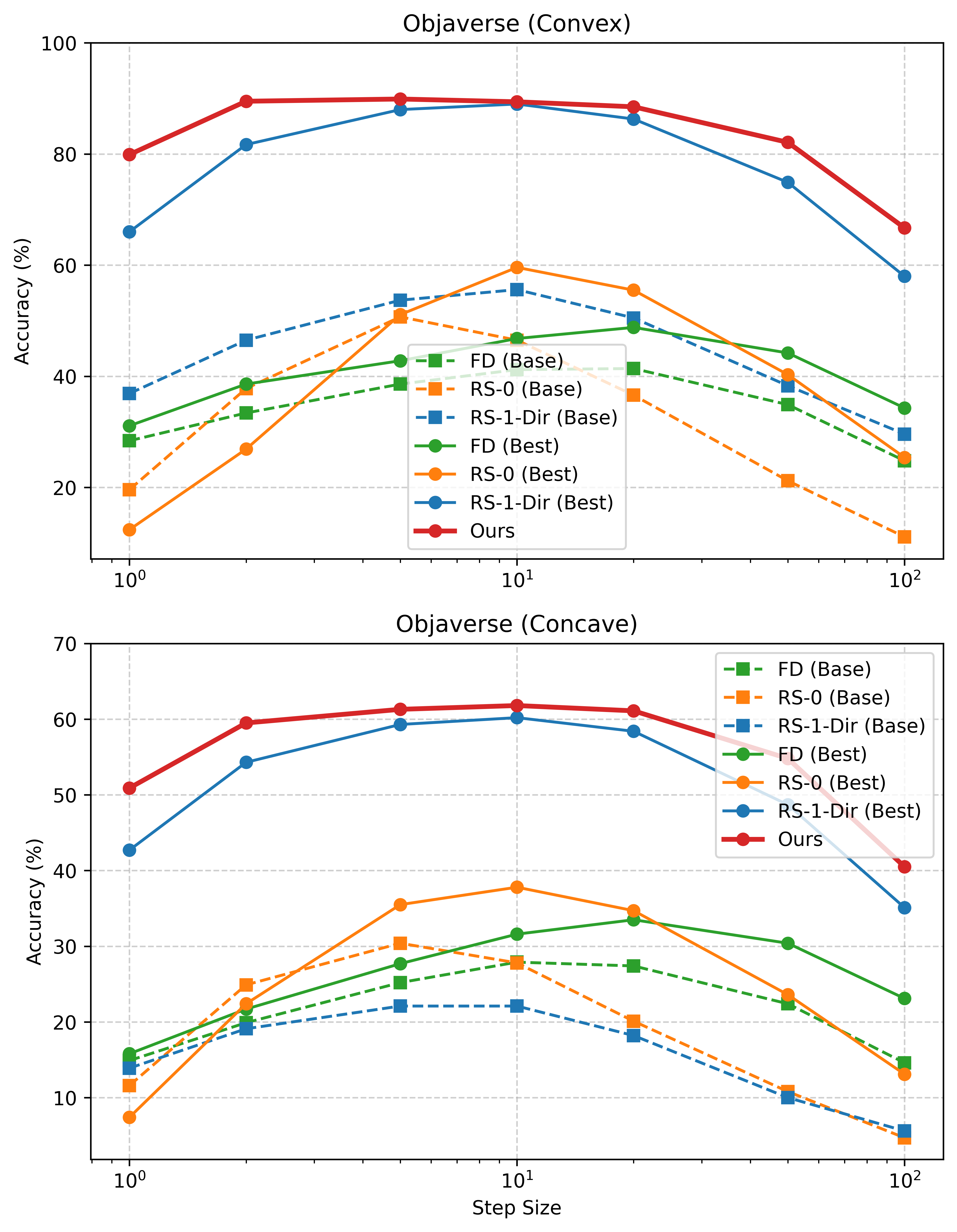}
    \caption{\textbf{Robustness to step size.} Our method maintains high performance across a wide range of step sizes without extra parameter tuning, indicating strong robustness.}
    \label{fig: step size}
\end{figure}
\fi

\end{document}